\documentclass{article}
\usepackage{bbding}
\usepackage{microtype}
\usepackage{graphicx}
\usepackage{subcaption}
\usepackage{booktabs} 
\usepackage{hyperref}
\usepackage{arydshln}
\usepackage[table]{xcolor}

\usepackage{multirow}
\usepackage{makecell}

\usepackage{multirow}

\usepackage[preprint]{icml2026}

\usepackage{amsmath}
\usepackage{amssymb}
\usepackage{mathtools}
\usepackage{amsthm}
\usepackage{booktabs}
\usepackage{adjustbox}
\usepackage{pifont}
\usepackage{pifont}
\usepackage{amssymb}
\usepackage[capitalize,noabbrev]{cleveref}

\theoremstyle{plain}
\newtheorem{theorem}{Theorem}[section]
\newtheorem{proposition}[theorem]{Proposition}

\theoremstyle{definition}

\theoremstyle{remark}

\newcommand{\cmark}{\ding{51}}
\newcommand{\xmark}{\ding{55}}
\definecolor{DeepSkyBlue4}{RGB}{0,104,139}
\hypersetup{
hidelinks,
colorlinks=true,
urlcolor=DeepSkyBlue4, 
citecolor=cyan,
linkcolor=red
}

\usepackage[textsize=tiny]{todonotes}

\icmltitlerunning{MorphSNN: Adaptive Graph Diffusion and Structural Plasticity for Spiking Neural Networks}

\begin{document}

\twocolumn[
  \icmltitle{MorphSNN: Adaptive Graph Diffusion and Structural Plasticity for Spiking Neural Networks}



  \icmlsetsymbol{equal}{*}

  \begin{icmlauthorlist}
    \icmlauthor{Yongsheng Huang}{neu,gdiist}
    \icmlauthor{Peibo Duan\textsuperscript{\Envelope}}{neu}
    \icmlauthor{Yujie Wu}{polyU}
    \icmlauthor{Kai Sun}{monash}
    \icmlauthor{Zhipeng Liu}{neu}
    \icmlauthor{Jiaxiang Liu}{gdiist}
    \icmlauthor{Guangyu Li}{gdiist}
    \icmlauthor{Changsheng Zhang}{neu}
    \icmlauthor{Bin Zhang}{neu}
    \icmlauthor{Mingkun Xu\textsuperscript{\Envelope}}{gdiist}
  \end{icmlauthorlist}

  \icmlaffiliation{neu}{School of Software, Northeastern University, China}
  \icmlaffiliation{gdiist}{Guangdong Institute of Intelligence Science and Technolog, China}
  \icmlaffiliation{polyU}{The Hong Kong Polytechnic University, China}
  \icmlaffiliation{monash}{Monash University, Australia}

  \icmlcorrespondingauthor{Peibo Duan}{duanpeibo@swc.neu.edu.cn}
  \icmlcorrespondingauthor{Mingkun Xu}{xumingkun@gdiist.cn}

  \icmlkeywords{Machine Learning, ICML}

  \vskip 0.3in
]



\printAffiliationsAndNotice{}  

\begin{abstract}

  Spiking Neural Networks (SNNs) currently face a critical bottleneck: while individual neurons exhibit dynamic biological properties, their macroscopic architectures remain confined within conventional connectivity patterns that are static and hierarchical. This discrepancy between neuron-level dynamics and network-level fixed connectivity eliminates critical brain-like lateral interactions, limiting adaptability in changing environments. To address this, we propose \textbf{MorphSNN}, a backbone framework inspired by biological non-synaptic diffusion and structural plasticity. Specifically, we introduce a \textbf{Graph Diffusion (GD)} mechanism to facilitate efficient undirected signal propagation, complementing the feedforward hierarchy. Furthermore, it incorporates a \textbf{Spatio-Temporal Structural Plasticity (STSP)} mechanism, endowing the network with the capability for instance-specific, dynamic topological reorganization, thereby overcoming the limitations of fixed topologies. Experiments demonstrate that MorphSNN achieves state-of-the-art accuracy on static and neuromorphic datasets; for instance, it reaches \textbf{83.35\%} accuracy on N-Caltech101 with only 5 timesteps. More importantly, its self-evolving topology functions as an intrinsic distribution fingerprint, enabling superior Out-of-Distribution (OOD) detection without auxiliary training.  The code is available at \url{anonymous.4open.science/r/MorphSNN-B0BC}.

\end{abstract}

\section{Introduction}

Recently, spiking neural networks (SNNs) have garnered growing interest due to their biological plausibility and their ability to generate rich temporal dynamics by leveraging the Integrate-and-Fire mechanism of biological neurons~\citep{maass1997networks}. Furthermore, driven by the rapid and mature development of artificial neural networks (ANNs) like Graph Neural Networks (GNNs)~\citep{scarselli2008graph}, ResNet~\citep{he2016deep}, and Transformers~\citep{vaswani2017attention}, SNNs have advanced substantially, offering energy-efficient, real-time intelligent computation~\citep{roy2019towards}. 
 
As SNNs inherit ANN-based techniques, many of the challenges inherent to ANNs have also been carried over into SNNs. A particularly critical issue stems from the static hierarchical architecture of ANNs, which constrains the evolutionary flexibility of SNNs and, in particular, hampers their ability to effectively encode dynamically changing neuromorphic data streams in two key ways. From the perspective of ``static", the architecture of the model is frozen post-training, making it difficult to adapt to complex and volatile real-world scenarios. As a result, a plethora of post-hoc remedial methods have emerged for new task learning, such as Continual Learning and Out-of-Distribution (OOD) detection~\citep{xu2024adaptive, avramovic2025out}. Existing efforts toward dynamic SNNs, such as pruning~\citep{han2023adaptive, han2023enhancing, han2025adaptive} and Neural Architecture Search (NAS)~\citep{ kim2022neural, pan2024brain}, concentrate on discovering efficient architectures by eliminating redundant connections. However, these modifications are typically restricted to the training-loop level. After deployment, the network structure becomes fixed, failing to achieve real-time structural adjustments at the inference timestep granularity in response to specific instances.

From the ``hierarchical” perspective, the unidirectional vertical stacking of layers inevitably leads to network degradation and increased signal latency~\citep{fang2021deep}. Although residual connections~\citep{hu2024advancing} are frequently employed to alleviate these effects, the lack of adaptable horizontal interactions gives rise to the feature binding problem~\citep{greff2020binding, zheng2022dance}. These limitations, already present in ANNs, become even more pronounced in SNNs due to their reliance on binary spike-based communication. While the self-attention mechanism ~\citep{lee2025spiking} has been proposed as an effective solution, it is substantially more powerful for selective fusion than for global information diffusion~\citep{dong2025can}. GNN-based SNNs~\citep{xu2023exploiting} provide an alternative that does not adhere to a strict hierarchical structure, but they typically depend on inflexible image-to-graph conversions, which disrupt the spatial coherence of neuromorphic streams.

Fundamentally, biological nervous systems operate on a different principle, where countless neurons are organized in a complex graph structure and exhibit efficient and precise computation through \textbf{non-synaptic diffusion} and \textbf{structural plasticity}~\citep{d2016modeling}. Specifically, non-synaptic diffusion enables excitatory signals to spread in all directions across an undirected physical substrate via volume transmission~\citep{mohan2011molecular}. Meanwhile, structural plasticity enables the millisecond-scale dynamic reorganization of synapses in response to input stimuli~\citep{lamprecht2004structural}. The synergistic interaction of these two mechanisms empowers neuronal populations to transcend static hierarchical constraints and rapidly synchronize states on a global scale~\citep{singer1999neuronal}.

Inspired by the analysis above, this paper presents a novel attempt to develop a model that transcends the conventional static hierarchical architectures used in SNNs. To this end, we propose a graph-structured network, \textbf{MorphSNN}, which abstracts the principles of the two mechanisms discussed earlier and thus tackles objective classification tasks in a more fundamental yet efficient way. Concretely, MorphSNN consists of two core components. The first is \textbf{Graph Diffusion (GD)}, a mathematically principled mechanism that characterizes signal propagation as a continuous, energy-minimizing diffusion process, thereby capturing undirected propagation in non-synaptic diffusion. The second component, \textbf{Spatio-Temporal Structural Plasticity (STSP)}, combines graph attention with plasticity modulation to emulate structural plasticity processes. The main contribution of this work is summarized as follows:



\begin{itemize}
    \item \textbf{Framework}: In contrast to GNN-based SNNs, we embed graph dynamics directly into convolutional feature maps, improving global information propagation while maintaining native spatial organization. 
    \item \textbf{Model}: The GD supports undirected information exchange to efficiently capture global context. Unlike NAS, the STSP allows the network to reconfigure its topology at each inference timestep, yielding adaptive, instance-specific structures for representing complex neuromorphic inputs.

    \item \textbf{Analysis}: A theoretical analysis shows that GD ensures dynamic stability, allowing the model to learn with a monotonic convergence toward the energy equilibrium, which in turn helps prevent over-activation and erroneous feature binding.
    \item \textbf{Implementation}: Experiments show that MorphSNN delivers SOTA results on neuromorphic benchmarks and remains competitive on conventional static datasets. Crucially, for new task learning that involves OOD detection, its dynamic topology serves as a training-free intrinsic metric, achieving performance on par with memory-heavy baseline methods.

\end{itemize}

\section{Related Work}

\subsection{Typical Hierarchical Structure based SNNs}
The architecture evolution of SNNs has largely followed that of ANNs, focusing on deepening fixed hierarchical structures. ~\citet{matsugu2002convolutional} established a simple three-layer convolutional SNN, which was later scaled to deep architectures using ResNet-based backbones~\citep{he2016deep}. Specifically, Spiking ResNet~\citep{hu2021spiking} pioneered the conversion of pre-trained ResNet to the spiking domain, extending SNN depth to 18 layers. SEW-ResNet~\citep{fang2021deep} modified the residual block by adding identity mappings to post-activation residuals, enabling 150+ layer SNNs.  Recently, the success of Vision Transformers~\citep{dosovitskiy2020image} has inspired spiking transformers. For instance, Spikformer~\citep{zhou2022spikformer} was the first work to integrate self-attention into SNNs, achieving 74\% accuracy on ImageNet-1k. Although self-attention theoretically constructs a global interaction graph, these models remain bound by a layer-wise, feed-forward execution flow. They rely on stacking rigid layers to expand the receptive field rather than utilizing continuous signal diffusion.  Furthermore, \citet{2025_huang_cognisnn} introduced the random graph architecture to construct SNNs, demonstrating the feasibility and significant potential of graph-structured topologies in the spiking domain.

\subsection{Static Paradigms in Conventional SNNs}
 
While SNNs introduce temporal dynamics via spiking neurons, their underlying topological structures predominantly follow a  static paradigm, lacking the adaptability to complex, varying environments during inference.  Early Liquid State Machines~\citep{maass2011liquid} utilized high-dimensional reservoir dynamics but relied on fixed, task-agnostic topologies. These task-independent connections remain constant throughout both training and inference, contrasting with the rapid synaptic rewiring observed in biological circuits.  To introduce dynamics, PLIF~\citep{fang2021incorporating} and ILIF~\citep{sun2025ilif} provided dynamic and time-varying modulation of spiking neurons. However, these approaches operate primarily at the neuron level, regulating neuronal firing rates without physically reshaping the information pathways. For continual learning, SA-SNN~\citep{shen2024efficient} advanced this by employing trace-based K-WTA and variable thresholds to achieve selective activation. DSD-SNN~\citep{han2023enhancing} proposed dynamic pruning and growth of kernels in the training loop. Similarly, recent TD-MCL~\citep{han2025continual} mimicked brain development by employing evolutionary strategies to grow long-range pathways while pruning local redundancy. While effective for knowledge transfer, such developmental plasticity stabilizes after training, leaving the network topology static during the inference of specific instances. CogniSNN~\citep{huang2025cognisnn} preliminarily explored inference-time dynamics based on pre-defined graph topological substructures, demonstrating the potential of such dynamics in enhancing model robustness.


\section{Preliminaries}

\subsection{LIF Model}

We employ the Leaky Integrate-and-Fire (LIF) model to process discrete input. Unlike the continuous activations in ANNs, LIF neurons are characterized by their rich dynamics. The membrane potential $u^{(t)}_i$ of neuron $i$ at timestep $t$ is determined by the decay factor $\tau_{\text{decay}}$ and the input current $C_i^{(t)}$ after synaptic integration:
\begin{align}
     u^{(t)}_i = \tau_{\text{decay}} \cdot u^{(t-1)}_i + C^{(t)}_i - V_{\text{th}}\cdot s_i^{(t-1)}.
    \label{eq1:mebrane_potential}
\end{align}%
When $u^{(t)}_i$ accumulates beyond the threshold $V_{\text{th}}$, the neuron fires a spike and resets its potential. The spike generation process is defined by the Heaviside step function $\Theta(\cdot)$ as:
\begin{align}
     {s}_i^{(t)} = \Theta(u_i^{(t)}- V_{\text{th}}) \in \left \{  0,1 \right \}.
    \label{eq2:firing}
\end{align}%
This event-driven nature grants SNNs energy efficiency while introducing non-differentiability. Thus, we employ surrogate gradient~\citep{2018_wu_spatio} for backpropagation.

\subsection{GCN and GAT}
Graph Convolutional Networks (GCNs;~\citep{kipf2016semi}) enable nodes to exchange information via neighborhood aggregation. Based on spectral graph theory, standard GCNs approximate graph convolution using first-order Chebyshev polynomials. The layer propagation rule is defined as:
\begin{align}
     \mathbf{H}’ = \sigma \left ( \tilde{\mathbf{D}}^{-\frac{1}{2} } \tilde{\mathbf{A}} \tilde{\mathbf{D}}^{-\frac{1}{2} } \mathbf{H} \mathbf{W}\right ),
    \label{eq3:gcn}
\end{align}%
\noindent where $\tilde{\mathbf{A}} = \mathbf{A} + \mathbf{Id}$ is the adjacency matrix with self-loops ($\mathbf{Id}$ denotes the identity matrix), and $\tilde{\mathbf{D}}$ is the degree matrix. $\mathbf{W}$ is a learnable weight matrix, while $\mathbf{H}$ and $\mathbf{H}'$ represent the node features before and after graph convolution, respectively. $\sigma(\cdot)$ denotes the activation function.

To capture the varying importance of neighbors, Graph Attention Networks (GAT;~\citep{velivckovic2017graph}) introduce an attention mechanism. The attention coefficient $e_{ij}$ between nodes $i$ and $j$ is computed as follows:
\begin{align}
     e_{ij} = \text{LeakyReLU}\left ( \mathbf{a}^\top [\mathbf{W}\mathbf{h}_i \mathbin{\|} \mathbf{W}\mathbf{h}_j] \right),
    \label{eq4:gat}
\end{align}%
\noindent where $\mathbf{h}_i$ and $\mathbf{h}_j$ are input features of nodes $i$ and $j$, $\mathbf{a}$ is the learnable attention vector, and $\mathbin{\|}$ indicates the concatenation.

\section{Methodology}

\begin{figure*}[t]
    \centering
    \includegraphics[width=\textwidth]{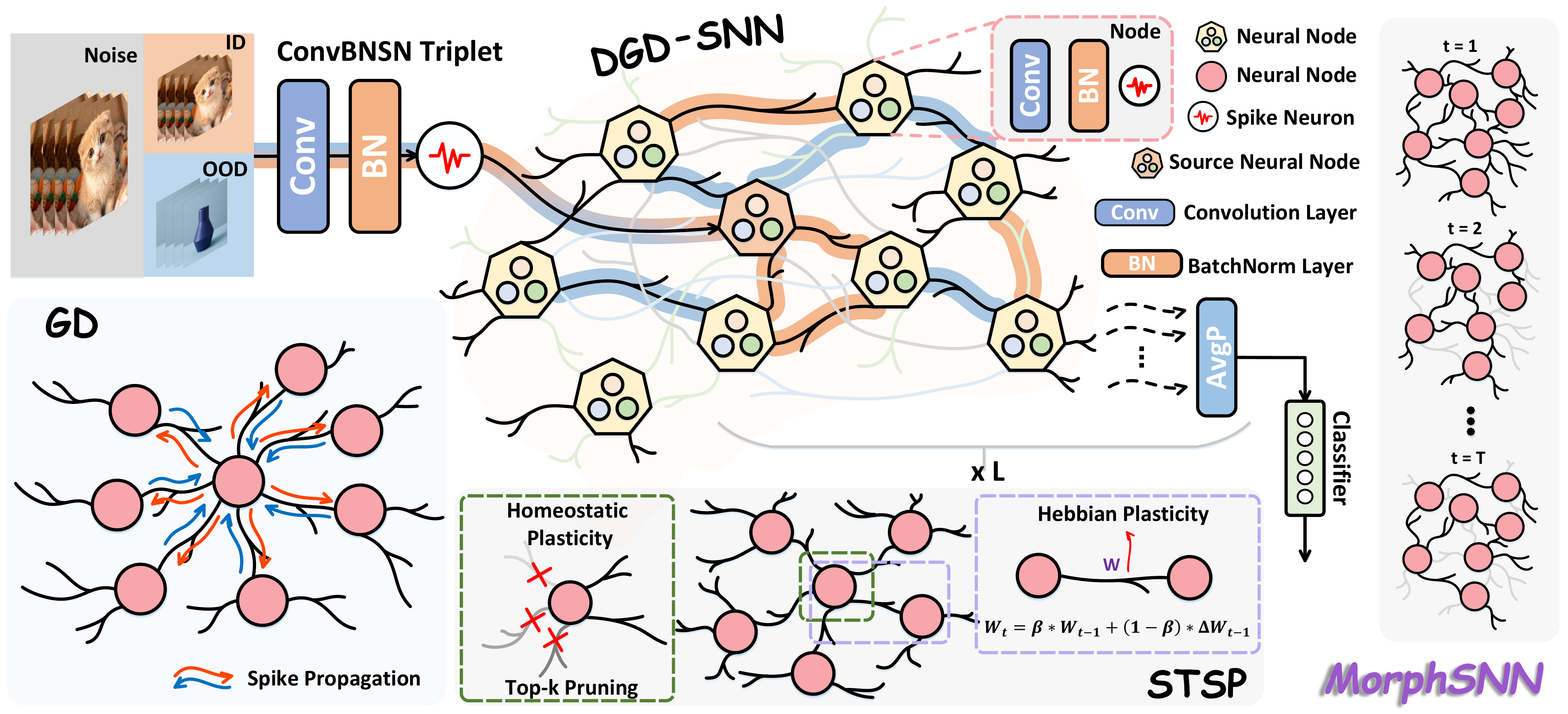}
    \caption{\textbf{The overall framework of MorphSNN.} Input spikes flow from the direct encoding layer through $L$ stacked DGD-SNN layers to the final classifier. \textbf{Top Left:} The model exhibits robustness against noise, with orange (In-Distribution, ID) and blue (OOD) samples activating distinct graph structures for OOD detection. \textbf{Top Center:} Each node is implemented as a \text{ConvBNSN} triplet. \textbf{Bottom Left:} The GD executes undirected signal propagation using the Laplacian operator. \textbf{Bottom Center:} The STSP strengthens synapses via Hebbian plasticity while maintaining stability via homeostatic plasticity, driving the temporal evolution of the entire graph structure (\textbf{Right}).}
    \label{fig:Framework}
\end{figure*}

\subsection{Overview of MorphSNN}

As illustrated in Fig.~\ref{fig:Framework}, MorphSNN integrates a direct encoding layer, multiple Dynamic Graph Diffusion SNN (DGD-SNN) layers, and a linear classifier. Serving as the core feature extractor, each DGD-SNN layer is mathematically modeled as a graph $\mathcal{G} = (\mathcal{V}, \mathcal{E})$ with $N = |\mathcal{V}|$ nodes. Each node $v_i \in \mathcal{V}$ is instantiated as a fundamental \textbf{ConvBNSN} triplet used by ~\citet{huang2025cognisnn}: $\text{ConvBNSN}(x)=\text{SN}(\text{BN}(\text{Conv}(x)))$. The graph topology is parameterized by an adjacency matrix $\mathbf{S} \in \mathbb{R}^{N \times N}$, where the entry $\mathbf{S}_{ij}$ denotes the synaptic weight from node $v_j$ to $v_i$ (with $\mathbf{S}_{ij} \neq 0$ implying $(v_j, v_i) \in \mathcal{E}$). In each batch, $\mathbf{S}$ is initialized as an all-ones symmetric matrix to ensure initial topological equivalence.


Without loss of generality, we designate the first node $v_0$ as the dedicated source node for receiving the output from the direct encoding layer. This strategy establishes a stable spatial reference frame for the diffusion process, avoiding the optimization ambiguity associated with random source selection. For a given node $v_i$, we denote its input and output as $\mathbf{I}_i$ and $\mathbf{O}_i$, respectively. Consequently, for the source node $v_0$ with input $\mathbf{I}^{(t)}_0$, its output $\mathbf{O}^{(t)}_0$ at timestep $t$ is expressed as:
\begin{align}
    \mathbf{O}_0^{(t)} = \text{AvgP}(\text{ConvBNSN}_0(\mathbf{I}_0^{(t)})),
    \label{eq5:source_node}
\end{align}%
\noindent where $\text{AvgP}(\cdot)$ denotes a $2 \times 2$ average pooling operation. 

To overcome the limitations of static connectivity, we use \textbf{STSP} (\textbf{Section \ref{subsec: STSP}}) to update the matrix $\hat{\mathbf{S}}^{(t)}$ at timestep $t$:
\begin{align}
    \hat{\mathbf{S}}^{(t)} = \text{STSP}(\mathbf{S}^{(t-1)}, \left \{ \mathbf{O}_0^{(t)},\mathbf{O}_1^{(t-1)}, \dots, \mathbf{O}_{N-1}^{(t-1)}\right \}).
    \label{eq6:stsp}
\end{align}%
Diverging from the rigid hierarchical transmission of standard feedforward networks, we employ \textbf{GD} (\textbf{Section \ref{subsec: GD}}) for undirected signal propagation among nodes:
\begin{align}
    \left \{ \mathbf{I}_1^{(t)},\dots,  \mathbf{I}_{N-1}^{(t)}\right \}=\text{GD}(\mathbf{O}_0^{(t)},\hat{\mathbf{S}}^{(t)}).
    \label{eq7:gd}
\end{align}%
Subsequently, we utilize the $\text{ConvBNSN}$ triplet within each node to extract features:
\begin{align}
    \mathbf{O}^{(t)}_i = \text{ConvBNSN}_i(\mathbf{I}_i^{(t)} ), \quad 1 \le i< N.
    \label{eq:feature_extract_new}
\end{align}%
Finally, the output $\mathbf{O}^{(t)}$ of the DGD-SNN is derived via a learnable weighted readout mechanism:
\begin{align}
    \mathbf{O}^{(t)} = \text{AvgP}(\sum_{i=0}^{N-1}\sigma_{1}(w_i)\cdot \mathbf{O}^{(t)}_i), \quad 0\le i< N.
    \label{eq9:total_output}
\end{align}%
Here, $w_i$ denotes the weight assigned to each node $v_i$, and $\sigma_{1}(\cdot)$ is the Sigmoid function. For stacked layers, the aggregated output $\mathbf{O}^{(t)}$ of the $l$-th layer serves directly as the input for the source node of the $(l+1)$-th layer.

\subsection{Spatio-Temporal Structure Plasticity}
\label{subsec: STSP}

To infer robust, instance-specific connectivity from dynamic neural activities, we propose \textbf{STSP}. Our design draws inspiration from biological structural plasticity, synthesizing four key principles: \textbf{Hebbian plasticity} to reinforce co-activated connections based on firing rates, \textbf{graph attention} to model inter-nodal correlations, \textbf{homeostatic plasticity} to stabilize synaptic weights, and \textbf{Short-Term Plasticity (STP)} to regulate weight updates along the temporal dimension.

For each input batch, the adjacency matrix is initialized as an all-ones matrix $\mathbf{S}^{(0)}=\mathbf{1}_{N\times N}$, representing a uniform prior. Inferring the dynamic topology at timestep $t$ requires the instantaneous states of all nodes. However, this presents a causal constraint: the source node $v_0$ receives immediate input, whereas the states of non-source nodes $v_{i>0}$ are updated only after the subsequent graph diffusion process.

To address this, we construct a causal hybrid state tensor $\tilde{\mathbf{O}}_i^{(t)} \in \mathbb{R}^{C\times H \times W} $ that combines the current source input with the historical state of other nodes:
\begin{align}
    \tilde{\mathbf{O}}_{i}^{(t)}=\left\{\begin{array}{lll}
\mathbf{O}_{0}^{(t)}, &i=0  \text { (Source node) } \\
\mathbf{O}_{i}^{(t-1)},&  0<i<N \text { (Others) }
\end{array}\right..
    \label{eq:hybrid_state}
\end{align}
At $t=1$, states $\mathbf{O}_{1:N-1}^{(t-1)}$ are initialized as zero tensor.

To project this hybrid state into a compact semantic embedding guided by the Hebbian principle (fire together, wire together), we employ global average pooling $\text{GAP}(\cdot)$ to aggregate spatial spikes into a channel-wise firing rate vector representing the global firing rate of each channel. The whole operation follows:
\begin{align}
    \mathbf{h}_i^{(t)} = \sigma_2\left (  \mathbf{W}_{\text{proj}} \cdot \text{GAP}(\tilde{\mathbf{O}}_i^{(t)}) \right ) , \quad 0\le i <N,
    \label{eq10:project}
\end{align}%
\noindent where $\mathbf{W}_{\text{proj}}$ denotes a learnable projection matrix, and $\sigma_2$ is the ReLU activation function. The resulting $ \mathbf{h}_i^{(t)}$ serves as the instantaneous feature vector for node $v_i$ used in subsequent topology inference.

To incorporate historical context into topology inference, we introduce a synaptic trace $\mathbf{Tr}_i^{(t)}$ as a smoothed surrogate for node activity, calculated as follows:
\begin{align}
    \mathbf{Tr}_i^{(t)} = \lambda\mathbf{Tr}_i^{(t-1)} + (1-\lambda)\mathbf{h}_i^{(t)}, \quad 0\le i <N.
    \label{eq11:trace}
\end{align}%
Here, $\lambda \in [0,1]$ serves as a hyperparameter controlling history retention. This temporal smoothing filters out transient noise, ensuring that the structural evolution is driven by stable, long-term semantic associations.

Furthermore, to capture diverse correlation patterns, we project the trace $\mathbf{Tr}_i^{(t)}$ into multiple semantic subspaces using a multi-head mechanism. For the $m$-th head ($m = 1,...,M_{\text{head}}$), the projected feature $\mathbf{h}_{i,m}^{(t)}$ is obtained by
\begin{align}
    \mathbf{h}^{(t)}_{i,m} = (\mathbf{W}_{\text{head}} \cdot \mathbf{Tr}_i^{(t)} )_m, \quad 0\le i <N,
    \label{eq12:multi_head}
\end{align}%
\noindent where $\mathbf{W}_{\text{head}}$ is a learnable transformation matrix, and $(\cdot)_m$ denotes vector slicing operation. We set $M_{\text{head}}=4$.

Next, we compute pairwise correlation scores between nodes $v_i$ and $v_j$ ($0\le i,j <N$) in the $m$-th subspace. Unlike standard GAT, we compute dense bidirectional scores $e_{ij,m}^{(t)}$ to facilitate subsequent global topology inference:
\begin{align}
    e_{ij,m}^{(t)} = \text{LeakyReLU}\left (\mathbf{a}_m^\top [\mathbf{h}^{(t)}_{i,m} \mathbin{\|} \mathbf{h}^{(t)}_{j,m}]\right ),
    \label{eq13:e}
\end{align}%
\noindent where $\mathbf{a}_m$ and $\mathbin{\|}$ denote the learnable weight vector for the $m$-th attention head and concatenation, respectively.

To enforce topological symmetry and fuse information across subspaces, we average the bidirectional scores within each head and subsequently aggregate across all $M_{\text{head}}$ heads. The symmetric attention score $\bar{e}_{i j}^{(t)}$ is derived as:
\begin{align}
   \bar{e}_{i j}^{(t)}=\frac{1}{M_{\text{head}}} \sum_{m=1}^{M_{\text{head}}} \left(\frac{e_{i j, m}^{(t)}+e_{j i, m}^{(t)}}{2}\right).
    \label{eq14:Symmetrization}
\end{align}%
To maintain homeostasis, the  symmetric scores are normalized via a Softmax function with a temperature $\tau=0.01$, yielding the instantaneous adjacency matrix $\hat{\mathbf{A}}_{i j}^{(t)}$:
\begin{align}
   \hat{\mathbf{A}}_{i j}^{(t)}=\text{Drop}\left(\frac{\exp \left(\bar{e}_{i j}^{(t)} / \tau\right)}{\sum_{l=1}^{N} \exp \left(\bar{e}_{i l}^{(t)} / \tau\right)}\right),
    \label{eq15:deltaS}
\end{align}%
\noindent where $\text{Drop}(\cdot)$ is dropout layer with a drop probability $0.2$.

Incorporating the principles of STP, we update the synaptic connectivity $\mathbf{S}^{(t)}$ using $\hat{\mathbf{A}}^{(t)}$. This is implemented via a momentum-based update rule:
\begin{align}
   \mathbf{S}^{(t)}=\beta \ \mathbf{S}^{(t-1)}+(1-\beta) \hat{\mathbf{A}}^{(t)},
    \label{eq16:updateS}
\end{align}%
\noindent where $\beta \in [0,1]$ serves as a hyperparameter controlling the evolution intensity of the graph structure. Notably, setting $\beta=1$ reduces the system to a static graph, a property utilized in our ablation studies.

Finally, to mitigate noise and induce topological sparsity, we implement a dynamic Top-$k$ pruning strategy, retaining only the $k$ connections with the highest synaptic weights for each node. To address the non-differentiability of the Top-$k$ selection, we employ a gradient masking mechanism during training.  Specifically, the pruning operation acts as a dynamic mask: during backpropagation, gradients propagate back to the learnable parameters exclusively through the values of the retained active edges, while gradients for pruned connections are zeroed out. This ensures end-to-end optimization of the underlying attention weights based on the most task-relevant pathways. The operation is formulated as follows:
\begin{align}
   \hat{\mathbf{S}}_{i j}^{(t)}=\left\{\begin{array}{ll}
\mathbf{S}_{i j}^{(t)}, & \text { if } \mathbf{S}_{i j}^{(t)} \in \text{Top}_{k}\left(\mathbf{S}_{i,:}^{(t)}\right) \\
0, & \text { otherwise }
\end{array}\right.,
    \label{eq17:pruning}
 \end{align}%
\noindent where $\text{Top}_{k}(\cdot)$ extracts the indices of the $k$ largest elements in a row. 

\subsection{Graph Diffusion}
\label{subsec: GD}
Upon the inferred sparse topology $\hat{\mathbf{S}}$, we introduce the \textbf{GD} mechanism for signal propagation. Unlike traditional GCNs, we propagate information from the source node $v_0$ to other nodes via $M$ iterative diffusion steps. 
Crucially, we design the diffusion operator to achieve spatio-temporal decoupling. Since LIF neurons inherently encode temporal memory via membrane potential decay term $\tau_{\text{decay}}u^{(t-1)}_i$, we avoid the explicit addition of self-loops (i.e., the renormalization trick $\mathbf{A}+\mathbf{Id}$ as in Eq.~\ref{eq3:gcn}) to prevent redundant self-information accumulation. Furthermore, to rectify the asymmetry introduced by Top-$k$ pruning, we re-symmetrize the topology to ensure a valid undirected operator $\mathbf{P}^{(t)}$:
\begin{align}
   \mathbf{P}^{(t)}=\mathbf{D}^{-\frac{1}{2}} \left(\frac{\hat{\mathbf{S}}^{(t)}+(\hat{\mathbf{S}}^{(t)})^{\top}}{2}\right)\mathbf{D}^{-\frac{1}{2}},
    \label{eq19:operator}
\end{align}%
\noindent where $\mathbf{D}$ is the degree matrix of $(\hat{\mathbf{S}}^{(t)}+(\hat{\mathbf{S}}^{(t)})^\top)/2$.

To initiate the diffusion, we construct a global graph signal tensor $\mathbf{X}^{(t)}$. Consistently with our source node design, we assign $\mathbf{O}_0^{(t)}$ to the first row and zero-pad the remaining $N-1$ rows, reflecting the latent state of the network prior to spatial propagation. Formally:
\begin{align}
   \mathbf{X}^{(t)}=\left[\begin{array}{c}
\mathbf{O}_{0}^{(t)} \\
\mathbf{0}_{N-1}
\end{array}\right],
    \label{eq18:x}
\end{align}%
\noindent where $\mathbf{0}_{N-1}$ denotes the zero-initialized states of other non-source nodes.

Finally, the signal propagates through the graph via an $M$-step diffusion process:
\begin{align}
   \mathbf{I}_{i}^{(t)}=\left[\left(\mathbf{P}^{(t)}\right)^{M} \mathbf{X}^{(t)}\right]_{i}, \quad 1 \leq i<N.
    \label{eq20:diffusion}
\end{align}%
 Here, $M$ controls the diffusion depth, and the operator $[\cdot]_i$ extracts the feature row corresponding to node $v_i$.

The complete algorithm is provided in Appendix~\ref{app:algorithm}.

\begin{table*}[t]
\footnotesize
   \caption{Performance (\%) comparison with state-of-the-art methods on DVS-Gesture, CIFAR10-DVS, and N-Caltech101. \textbf{Bold} and \underline{underline} denote the best and second-best results, respectively. Results marked with $^*$ denote our reproduction.}
  \label{tab:dvs_result}
  \centering
  \begin{tabular}{llcccc}
    \toprule
    \textbf{Method} & \textbf{Architecture}  & $\mathbf{T}$ & \textbf{DVS-Gesture} & \textbf{CIFAR10-DVS}& \textbf{NCaltech101}\\  
    \midrule
     MLF \cite{feng2022multi} & VGG-9 & 5 & 85.77& 65.88 & 70.4 \\
     SSNN \cite{ding2024shrinking} & VGG-9 & 5 & 90.74 & 73.63 & 77.97 \\
     Spikingformer \cite{zhou2023spikingformer} &Spikingformer-2-256 & 5 & 93.40$^*$ & 77.50$^*$ & \underline{82.14$^*$} \\ 
     CogniSNN \cite{huang2025cognisnn} & WS-RGA-7 & 5 & \underline{95.41} & \underline{79.00} & 78.62 \\
     \midrule 
         \cellcolor{pink!25}\textbf{MorphSNN} &  \cellcolor{pink!25}DGD-SNN-7-3 & \cellcolor{pink!25}5&  \cellcolor{pink!25}\textbf{96.53$\hspace{0.1em} \scriptstyle \pm \hspace{0.1em} 0.11$}& \cellcolor{pink!25}\textbf{79.90$\hspace{0.1em} \scriptstyle \pm \hspace{0.1em} 0.13$} & \cellcolor{pink!25}\textbf{83.35$\hspace{0.1em} \scriptstyle \pm \hspace{0.1em} 0.18$} \\
     \midrule
     SEW-ResNet \cite{fang2021deep} & ConvNet & 16 & 97.92 & 74.40 & - \\
    Spikformer \cite{zhou2022spikformer} & Spikformer-2-256 & 16 & 98.30 & 80.90 & - \\
     Spikingformer \cite{zhou2023spikingformer} &Spikingformer-2-256 & 16 & 98.30 & 81.30 & - \\ 
     SpikingReformer \cite{shi2024spikingresformer}   & SpikingReformer-4-384 & 16& - & 78.80 & 81.29  \\
       Spike-D- Transformer \cite{zhou2024qkformer} & S-d-Transformer-2-256 & 16 & 98.26$^*$ & 80.00 & \underline{81.80}\\
      CogniSNN \cite{2025_huang_cognisnn} & ER-RGA-7 & 16 & \underline{98.61} & \underline{81.60$^*$} & 81.32$^*$
      \\
      \midrule
     \cellcolor{pink!25}\textbf{MorphSNN} &  \cellcolor{pink!25}DGD-SNN-7-3 & \cellcolor{pink!25}16& \cellcolor{pink!25}\textbf{98.96$\hspace{0.1em} \scriptstyle \pm \hspace{0.1em} 0.05$}& \cellcolor{pink!25}\textbf{83.70$\hspace{0.1em} \scriptstyle \pm \hspace{0.1em} 0.10$}& \cellcolor{pink!25}\textbf{82.74$\hspace{0.1em} \scriptstyle \pm \hspace{0.1em} 0.09$} \\
     
    \bottomrule
    \vspace{-8mm}
  \end{tabular}
\end{table*}

\subsection{Theoretical Analysis}
Graph diffusion in SNNs faces an intrinsic trade-off between feature binding and discrimination: excessive propagation leads to over-smoothing, causing the network to lose discriminative ability, while insufficient propagation results in under-activation, hindering feature synchronization and leading to the binding problem. Ideally, the network should converge to a global equilibrium, binding related features while preserving distinct class boundaries.

To rigorously quantify this balance, we model the signal propagation as a \textbf{Dirichlet energy minimization process}~\citep{shuman2013emerging}, serving as a metric for global signal smoothness on the learned topology $\hat{\mathbf{S}}^{(t)}$. Let $\mathbf{Y}$ denote the post-diffusion input matrix, where $\mathbf{y}_0$ is the source node input and $\mathbf{y}_{i}$ $(i>0)$ are the diffused states of non-source nodes. The Dirichlet energy of the diffused states $\mathbf{Y}$ is defined as:
\begin{align}
    \mathcal{E}_{\text {Dir }}(\mathbf{Y})=\operatorname{tr}\left(\mathbf{Y}^{\top} \mathbf{L}_{\text {norm }} \mathbf{Y}\right),
    \label{eq22:dirichlet}
\end{align}%
where $\mathbf{L}_{\text {norm }}=\mathbf{Id}-\mathbf{D}^{-1 / 2} \mathbf{S}_{\text {sym }}^{(t)} \mathbf{D}^{-1 / 2}$ is the symmetric normalized Laplacian, and $\mathbf{S}_{\text{sym}}^{(t)} = (\hat{\mathbf{S}}^{(t)}+(\hat{\mathbf{S}}^{(t)})^\top)/2$ is the symmetrized adjacency matrix. 
\begin{proposition}
Symmetry of $\mathbf{P}^{(t)}$ is sufficient for the diffusion to constitute a valid gradient flow on $\mathcal{E}_{\text {Dir }}$. This ensures real spectra, eliminating non-physical oscillations and guaranteeing stability of the gradient flow dynamics.
\end{proposition}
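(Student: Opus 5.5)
The plan is to make the statement precise by identifying the diffusion step with an explicit gradient-descent step on $\mathcal{E}_{\text{Dir}}$. First, note that $\mathbf{P}^{(t)} = \mathbf{D}^{-1/2}\mathbf{S}_{\text{sym}}^{(t)}\mathbf{D}^{-1/2}$ by definition, so $\mathbf{L}_{\text{norm}} = \mathbf{Id} - \mathbf{P}^{(t)}$. Symmetry of $\mathbf{P}^{(t)}$ holds by construction, since $\mathbf{S}_{\text{sym}}^{(t)}$ is symmetric and $\mathbf{D}^{-1/2}$ is diagonal. Hence $\mathbf{L}_{\text{norm}}$ is symmetric as well.

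Next, compute the gradient of the energy. For a general matrix one has $\nabla_{\mathbf{Y}} \operatorname{tr}(\mathbf{Y}^\top \mathbf{L}\mathbf{Y}) = (\mathbf{L}+\mathbf{L}^\top)\mathbf{Y}$, and this collapses to $2\mathbf{L}_{\text{norm}}\mathbf{Y}$ exactly when $\mathbf{L}$ is symmetric. This is where sufficiency of symmetry enters. Without it, the linear map $\mathbf{Y}\mapsto(\mathbf{Id}-\mathbf{P})\mathbf{Y}$ would not be a gradient field at all, because its Jacobian would not be symmetric.

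With symmetry in hand, the continuous flow $\dot{\mathbf{Y}} = -\tfrac12\nabla\mathcal{E}_{\text{Dir}} = -\mathbf{L}_{\text{norm}}\mathbf{Y}$ is a genuine gradient flow. Its explicit Euler discretization with unit step gives $\mathbf{Y}_{k+1} = \mathbf{Y}_k - \mathbf{L}_{\text{norm}}\mathbf{Y}_k = \mathbf{P}^{(t)}\mathbf{Y}_k$. Iterating $M$ times starting from $\mathbf{X}^{(t)}$ reproduces the diffusion rule $(\mathbf{P}^{(t)})^M\mathbf{X}^{(t)}$, so each diffusion step is one gradient step on $\mathcal{E}_{\text{Dir}}$.

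For the spectral claims, apply the spectral theorem to the real symmetric matrix $\mathbf{P}^{(t)}$. This gives real eigenvalues and an orthonormal eigenbasis, so there are no complex-conjugate eigenpairs and hence no rotational or oscillatory modes. To bound the spectrum, use the standard quadratic-form identity
\[
\mathbf{f}^\top\mathbf{L}_{\text{norm}}\mathbf{f} = \tfrac12\sum_{i,j}(\mathbf{S}_{\text{sym}})_{ij}\Bigl(\tfrac{f_i}{\sqrt{d_i}}-\tfrac{f_j}{\sqrt{d_j}}\Bigr)^2 \ge 0,
\]
which holds for nonnegative weights. Nonnegativity of the weights is guaranteed here: $\mathbf{S}^{(t)}$ is a convex combination of the all-ones matrix and softmax outputs, so all its entries are nonnegative, and Top-$k$ pruning either keeps an entry or sets it to zero. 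The companion identity with a plus sign gives $\mathbf{Id}+\mathbf{P}\succeq 0$. Together these yield $\operatorname{spec}(\mathbf{P}^{(t)})\subset[-1,1]$ and $\operatorname{spec}(\mathbf{L}_{\text{norm}})\subset[0,2]$.

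Stability then follows directly. The continuous flow contracts every Laplacian eigenmode as $e^{-\mu t}$ with $\mu\ge0$. In the discrete iteration, writing $\mathbf{Y}_k=\sum_r c_r\mathbf{u}_r$ gives $\mathcal{E}_{\text{Dir}}(\mathbf{Y}_k)=\sum_r (1-\lambda_r)\lambda_r^{2k}\|c_r\|^2$. This is monotonically non-increasing in $k$ because $\lambda_r^2\le1$ and $1-\lambda_r\ge0$.

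The main obstacle I expect is making the dynamics well defined at all. The dropout mask and Top-$k$ pruning can leave a node with degree zero, so that $\mathbf{D}^{-1/2}$ is undefined. I would handle this by restricting to the non-isolated nodes, or by adopting the convention $d_i^{-1/2}=0$ for isolated nodes, which keeps both symmetry and the spectral bounds intact. A secondary subtlety is the eigenvalue $-1$ that arises on bipartite components. At that eigenvalue the mode has zero energy coefficient $(1-\lambda_r)$, but its sign alternates under iteration. I would note that this does not increase the energy, so the stability claim in the energy sense still holds. I would state the real-spectrum claim as excluding complex oscillations rather than every sign flip.
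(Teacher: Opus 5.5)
Your proof is correct, but it takes a different route from the paper's.

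\textbf{The paper's route.} The paper starts from a possibly asymmetric $\mathbf{L}$ and splits it as $\mathbf{L}_{sym}+\mathbf{L}_{skew}$. It argues that only $\mathbf{L}_{sym}$ enters $\nabla_{\mathbf{Y}}\mathcal{E}_{Dir}$, while $\mathbf{L}_{skew}$ generates a ``rotational'' flow. It ties that flow to complex eigenvalues $\alpha_k+i\beta_k$ and oscillating modes, and presents symmetry as what removes the skew part; it even ends by calling symmetry \emph{necessary}.

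\textbf{Your route.} You verify symmetry of $\mathbf{P}^{(t)}$ from its construction and use $\nabla_{\mathbf{Y}}\operatorname{tr}(\mathbf{Y}^\top\mathbf{L}\mathbf{Y})=(\mathbf{L}+\mathbf{L}^\top)\mathbf{Y}$. You identify one diffusion step as a unit Euler step of $\dot{\mathbf{Y}}=-\mathbf{L}_{norm}\mathbf{Y}$, which is really the content of the paper's Theorem 4.2 folded in. Real spectra then come from the spectral theorem, and $\operatorname{spec}(\mathbf{P}^{(t)})\subset[-1,1]$ from the two quadratic-form identities with nonnegative weights.

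\textbf{What each buys.} The paper's decomposition gives intuition for the Asym ablation, i.e.\ for what goes wrong without symmetry. As a proof, however, it is heuristic:
\begin{itemize}
\item $\operatorname{tr}(\mathbf{Y}^\top\mathbf{L}_{skew}\mathbf{Y})=0$ shows the skew flow preserves $\|\mathbf{Y}\|_F^2$, not $\mathcal{E}_{Dir}$.
\item A nonzero skew part does not by itself force complex eigenvalues.
\item The bound $\lambda(\mathbf{L}_{norm})\in[0,2]$ used later is only asserted.
\end{itemize}
Your argument proves exactly the sufficiency claim as stated. It also supplies what the paper leaves implicit: entrywise nonnegativity of $\hat{\mathbf{S}}^{(t)}$, an explicit monotone energy sequence, and the accurate caveat that real spectra exclude complex oscillation but not the sign flip at eigenvalue $-1$.

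Your isolated-node convention is only needed when $\beta=0$. For $\beta>0$, every entry of $\mathbf{S}^{(t)}$ is at least $\beta^t$, so Top-$k$ keeps $k$ positive entries per row and $d_i>0$ automatically.

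\textbf{Two small fixes.}
\begin{itemize}
\item At $\lambda_r=-1$ the energy coefficient is $1-\lambda_r=2$, not $0$. That mode's contribution $2\|c_r\|^2$ is conserved because $\lambda_r^{2k}=1$, so your conclusion that the energy does not increase still holds.
\item In $\mathcal{E}_{Dir}(\mathbf{Y}_k)=\sum_r(1-\lambda_r)\lambda_r^{2k}\|c_r\|^2$, the $c_r$ must be the coefficients of $\mathbf{Y}_0$, not of $\mathbf{Y}_k$.
\end{itemize}
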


\begin{theorem}
\label{thm:gd_convergence}
The GD functions as a gradient descent step on $\mathcal{E}_{Dir}$ with the learning rate $\eta=0.5$. This represents the maximum stable step size, guaranteeing monotonic convergence to the energy equilibrium.
\end{theorem}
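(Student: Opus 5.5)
The strategy is to compute the gradient of the Dirichlet energy in \eqref{eq22:dirichlet} and show that one step of plain gradient descent with $\eta=0.5$ reproduces exactly one application of the diffusion operator $\mathbf{P}^{(t)}$ from \eqref{eq19:operator}.

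\textbf{Step 1: the gradient.} The matrix $\mathbf{S}_{\text{sym}}^{(t)}$ is symmetric by construction. Hence $\mathbf{L}_{\text{norm}}$ is symmetric, which is the content of the preceding Proposition. Differentiating the trace gives
\begin{align*}
\nabla_{\mathbf{Y}}\mathcal{E}_{\text{Dir}}(\mathbf{Y}) = (\mathbf{L}_{\text{norm}}+\mathbf{L}_{\text{norm}}^{\top})\mathbf{Y} = 2\mathbf{L}_{\text{norm}}\mathbf{Y}.
\end{align*}

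\textbf{Step 2: the update.} A descent step with rate $\eta$ reads
\begin{align*}
\mathbf{Y}\leftarrow \mathbf{Y}-2\eta\mathbf{L}_{\text{norm}}\mathbf{Y} = \big((1-2\eta)\mathbf{Id}+2\eta\,\mathbf{P}^{(t)}\big)\mathbf{Y},
\end{align*}
using $\mathbf{P}^{(t)}=\mathbf{Id}-\mathbf{L}_{\text{norm}}$. Setting $\eta=0.5$ removes the identity term, so the update becomes exactly $\mathbf{Y}\leftarrow\mathbf{P}^{(t)}\mathbf{Y}$. This matches one diffusion iteration in \eqref{eq20:diffusion}. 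This is also where the absence of self-loops matters: no $+\mathbf{Id}$ is absorbed into $\mathbf{P}^{(t)}$. The $M$-step GD is therefore $M$ gradient steps started from $\mathbf{X}^{(t)}$.

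\textbf{Step 3: stability and the step-size bound.} I would use the standard fact that the spectrum of $\mathbf{L}_{\text{norm}}$ lies in $[0,2]$, because $\mathbf{S}_{\text{sym}}$ has nonnegative entries. Nonnegativity holds since the Softmax outputs and the momentum average \eqref{eq16:updateS} are nonnegative and the all-ones initialization is nonnegative. For any symmetric $\mathbf{L}$ with $\lambda_{\max}(\mathbf{L})\le 2$, the map $\mathbf{Y}\mapsto\mathbf{Y}-2\eta\mathbf{L}\mathbf{Y}$ gives strict monotone decrease of $\mathcal{E}_{\text{Dir}}$ when $\eta\le 1/\lambda_{\max}$. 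To see this, diagonalize $\mathbf{L}$ in an orthonormal eigenbasis, which exists by Step 1. Each eigencomponent $c_k$ is then multiplied by $(1-2\eta\lambda_k)$, and the energy becomes $\sum_k \lambda_k(1-2\eta\lambda_k)^2\|c_k\|^2$.

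Nonincrease requires $|1-2\eta\lambda_k|\le1$, that is, $\eta\le 1/\lambda_k$, for every $k$. In the worst case $\lambda_{\max}=2$, which is attained for bipartite components. This gives $\eta\le 1/2$, so $\eta=0.5$ is the largest step guaranteed stable uniformly over all admissible topologies.

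\textbf{Step 4: convergence.} With $\eta=0.5$ each factor is $|1-\lambda_k|\le1$. Components with $\lambda_k\in(0,2)$ decay geometrically. The $\lambda_k=0$ component, which spans $\mathbf{D}^{1/2}\mathbf{1}$ per connected component, is preserved and forms the energy equilibrium. The energy sequence is nonincreasing and bounded below by $0$, so it converges monotonically.

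\textbf{Main obstacle.} The main obstacle is the $\lambda_k=2$ edge case. On a bipartite component at $\eta=0.5$ the factor is $-1$, so that component oscillates in sign but keeps its energy. Monotonicity of the energy still holds, but convergence of the states does not. I would handle this by stating convergence in energy, not in the iterates. An alternative is to argue that Softmax with dropout leaves $\mathbf{S}_{\text{sym}}$ generically with non-bipartite support after Top-$k$, so $\lambda_{\max}<2$. I expect to need to phrase the "maximum stable step size" claim as a worst case over admissible graphs to make it rigorous.
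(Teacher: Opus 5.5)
Your proposal is correct and follows the paper's own proof. Both compute $\nabla_{\mathbf{Y}}\mathcal{E}_{Dir}=2\mathbf{L}_{norm}\mathbf{Y}$, observe that $\eta=0.5$ cancels the identity term so the step is exactly $\mathbf{Y}\mapsto\mathbf{P}^{(t)}\mathbf{Y}$, and use that the spectrum of $\mathbf{L}_{norm}$ lies in $[0,2]$, so the amplification factor $1-\lambda$ stays in $[-1,1]$ with the bipartite case $\lambda=2$ as the boundary.

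Your version is more careful than the paper, which simply calls the $\lambda=2$ case ``stable oscillation''. The per-mode energy formula helps, and so does your handling of that case: monotone convergence of the energy rather than of the iterates, and ``maximum stable step'' only as a worst case over graphs. One wording fix: drop ``strict'' in Step~3. At $\eta=1/\lambda_{\max}$ the top mode keeps its energy, as you note yourself later.
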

\textbf{Implication.} Theorem~\ref{thm:gd_convergence} guarantees convergence to a stable equilibrium. At this state, the signal is sufficiently smooth to bind related features yet retains necessary variance for classification.
Detailed proofs and empirical visualization are provided in Appendices~\ref{app:theoretical} and ~\ref{app:dirichlet}, respectively.

\section{Experiments}

\begin{table}
\centering
\caption{Comparison (\%) on static CIFAR-10/100 and ImageNet.}
\vspace{-2mm}
\label{tab:static}
\resizebox{\linewidth}{!}{%
\begin{tabular}[width=0.85\textwidth]{lcccccccc}
    \toprule
    \textbf{Method} & $\mathbf{T}$ &\textbf{CIFAR-10} &\textbf{CIFAR-100}& \textbf{ImageNet} \\
    \midrule

    ResNet-19 (ANN) & 1 & 94.97 & 75.35  & - \\
     Transformer (ANN) & 1 & \textbf{96.73} &\textbf{81.02}  & - \\
    \midrule
    \cellcolor{pink!25}\textbf{MorphANN (Ours)} &  \cellcolor{pink!25}1 & \cellcolor{pink!25}\underline{96.50} & \cellcolor{pink!25}\underline{80.29} & - \\
    \midrule
    SNASNet~\citep{kim2022neural} & 5 & 93.64 & 73.04 &- \\
    Spikformer~\cite{zhou2022spikformer}& 4 &  95.51 &78.21 & 73.38 \\
    Spikingformer~\citep{shi2024spikingresformer} & 4 &  \underline{95.81} &78.21 & \underline{74.79} \\
    S-Transformer~\citep{yao2023spike} & 4 & 95.60 & \underline{78.40} & 74.57\\  
    \midrule
    \cellcolor{pink!25}\textbf{MorphSNN (Ours)}  & \cellcolor{pink!25}4 &\cellcolor{pink!25}\textbf{95.97$\hspace{0.1em} \scriptstyle \pm \hspace{0.1em} 0.07$} &\cellcolor{pink!25}\textbf{79.87$\hspace{0.1em} \scriptstyle \pm \hspace{0.1em} 0.19$}  &\cellcolor{pink!25}\textbf{75.04}  \\
    \bottomrule
    \end{tabular}}
\end{table}

\textbf{Datasets and Implementation.}
 We evaluate MorphSNN on seven benchmarks: \textbf{DVS-Gesture}, \textbf{CIFAR10-DVS}, \textbf{N-Caltech101}, and \textbf{UCF101-DVS} for neuromorphic processing; \textbf{CIFAR-10/100} and \textbf{ImageNet} for static classification. Experiments on ImageNet are conducted on four NVIDIA H100 GPUs, while others datasets utilize a single RTX 4090. Further implementation details are provided in Appendix~\ref{app: experiment}.

\textbf{Comparative and Ablation Protocols.}
We benchmark MorphSNN against SOTA CNN and Transformer-based SNNs using three protocols:\ding{182} \textbf{Mechanism Isolation:} Comparing with random directed topology~\citep{2025_huang_cognisnn} to quantify GD and STSP gains; \ding{183} \textbf{Cross Domain Generalization:} Evaluating an ANN variant , MorphANN, with GeLU ($T=1$) to verify universality of GD; and \ding{184} \textbf{Open-World Adaptability:} Assessing OOD detection and perturbation robustness to examine how STSP-driven dynamic adaptation enhances resilience in volatile environments. Hyperparameter sensitivity, detailed visualizations, and energy efficiency analyses are provided in Appendices~\ref{app:sensitivity}, \ref{app:visualization}, and \ref{app:efficiency}.


\subsection{Overall Performance}

\textbf{Performance on Neuromorphic Benchmarks.}
As summarized in Table~\ref{tab:dvs_result},  MorphSNN excels in rapid-response inference. Under a 5-timestep constraint on DVS-Gesture, it outperforms CogniSNN by 1.12\%, demonstrating distinct advantages in rapid-response scenarios. In contrast, our reproduction of Spikingformer indicates that Transformer-based SNNs struggle in such settings, likely due to their reliance on long-term temporal accumulation. This advantage extends to CIFAR10-DVS and N-Caltech101, where MorphSNN surpasses the second-best methods by 0.9\% and 1.21\%, respectively.  When extended to 16 timesteps, MorphSNN maintains a substantial lead. On CIFAR10-DVS, it achieves a 2.1\% accuracy gain over CogniSNN, while preserving a slight 0.3\% edge on DVS-Gesture. Notably, on N-Caltech101, MorphSNN attains an accuracy of 82.74\%, surpassing the 81.80\% record held by the SOTA Spike-Driven Transformer.
 
\textbf{Performance on Static Benchmarks.}
As shown in Table~\ref{tab:static}, while the ANN variant, MorphANN, improves upon ResNet-19 under a comparable 10M parameter budget, it trails Vision Transformers.  However, in the spiking domain, MorphSNN exhibits superior efficacy. It surpasses Spikingformer by 0.16\% on CIFAR-10 and achieves a substantial 1.4\% margin over S-Transformer on the more complex CIFAR-100. Crucially, on ImageNet, MorphSNN achieves 75.04\% Top-1 accuracy, outperforming Spikingformer (74.79\%) and S-Transformer (74.57\%). This performance contrast underscores the intrinsic synergy between SNNs and undirected graph diffusion: spiking neurons effectively filter noise during membrane potential integration, whereas continuous-valued ANNs are prone to propagating and amplifying global errors across the graph.

Notably, the optimal performance on static datasets, as well as on CIFAR10-DVS and N-Caltech101, is achieved by the static MorphSNN variant ($\beta=1$). We provide a detailed analysis of this phenomenon in the next section.

\subsection{Ablation Studies}

\label{sec:ablation}
\begin{table}
\centering
\caption{Ablation studies (\%) . Sym denotes symmetric.}
\vspace{-2mm}
\label{tb:ablation}
\resizebox{\linewidth}{!}{%
\begin{tabular}[width=0.85\textwidth]{lccccccc}
    \toprule
    \textbf{Method} & \textbf{GD} & \textbf{STSP} & \textbf{Sym} & \textbf{DVS128} & \textbf{C10DVS} & \textbf{NCal101}   \\
    \midrule
    \textbf{Baseline} & \xmark &\xmark & \xmark &  92.01 &78.90 & 81.77 \\
 
    \textbf{Static}  & \cmark & \xmark & \cmark & \underline{95.44} & \textbf{79.90} & \textbf{83.35} \\

    \textbf{Asym} & \cmark & \cmark & \xmark & 94.44 & 78.20 & 82.14 \\
    \cellcolor{pink!25} \textbf{Full} &\cellcolor{pink!25} \cmark &\cellcolor{pink!25} \cmark &\cellcolor{pink!25} \cmark & \cellcolor{pink!25}\textbf{96.53} & \cellcolor{pink!25}\underline{79.40} &\cellcolor{pink!25} \underline{82.75}  \\
    \bottomrule
    \vspace{-8mm}
    \end{tabular}}
\end{table}
To quantify the individual contributions of the proposed GD and STSP, we construct four ablation variants: \ding{182} a \textbf{Baseline} employing random directed topology from CogniSNN~\citep{2025_huang_cognisnn}, which utilizes standard directed propagation instead of GD; \ding{183} a \textbf{Static} variant where fixing $\beta = 1$ freezes topological evolution, preserving the symmetric all-ones initialization; \ding{184} an \textbf{Asym} variant lacking adjacency matrix symmetrization; and \ding{185} the \textbf{Full} MorphSNN framework.

\begin{table}
\centering
\caption{Comparison and STSP ablation on UCF101-DVS.}
\vspace{-2mm}
\label{tab:ucf101-DVS}
\resizebox{\linewidth}{!}{%
\begin{tabular}[width=0.85\textwidth]{lcc}
    \toprule
    \textbf{Model} &  $\mathbf{T}$ & \textbf{Accuracy} (\%)  \\
    \midrule
    Event Frames~\citep{bi2020graph} & 8 & 57.9 \\  
    Res-SNN18 + RM~\citep{yao2023sparser} & 8 & 63.5 \\
     TIM~\citep{shen2024tim}& 10 &63.8\\
     \cmidrule{1-3}
    \cellcolor{pink!25}\textbf{MorphSNN (Static)}&\cellcolor{pink!25}10& \cellcolor{pink!25}\underline{65.81}\\
    \cellcolor{pink!25}\textbf{MorphSNN (Full)} &\cellcolor{pink!25}10& \cellcolor{pink!25}\textbf{67.12}\\
    \bottomrule
    \end{tabular}}
\end{table}

\begin{table*}[t]
\footnotesize
\centering
\caption{OOD detection scores obtained for \textbf{Full} MorphSNN in neuromorphic datasets. $\uparrow$ indicates higher is better, $\downarrow$ indicates lower is better. $^{\ast}$ denotes our proposed method. \textbf{Bold} and \underline{underline} denote the best and second-best results, respectively.}
\label{tb:OOD}
\begin{adjustbox}{width=\textwidth}
\begin{tabular}{ll ccccc ccccc ccccc}
    \toprule
    \multirow{2}{*}{\textbf{In dataset}} & \multirow{2}{*}{\textbf{OOD dataset}} & \multicolumn{5}{c}{\textbf{AUROC $\uparrow$}} & \multicolumn{5}{c}{\textbf{AUPR-Out $\uparrow$}} & \multicolumn{5}{c}{\textbf{FPR95 $\downarrow$}} \\
    \cmidrule(lr){3-7} \cmidrule(lr){8-12} \cmidrule(lr){13-17}
    & & \cellcolor{pink!25}\textbf{DGP}$^{\ast}$ & KNN & SCP & MSP & Energy & \cellcolor{pink!25}\textbf{DGP}$^{\ast}$ & KNN & SCP & MSP & Energy & \cellcolor{pink!25}\textbf{DGP}$^{\ast}$ & KNN & SCP & MSP & Energy \\
    \midrule
    \multirow{3}{*}{DVS128} 
    & C10-DVS 
    & \cellcolor{pink!25}\underline{99.22} & \textbf{99.25} & 87.19 & 78.01 & 83.32 
    & \cellcolor{pink!25}\underline{99.74} & \textbf{99.75} & 94.51 & 90.60 & 92.46 
    & \cellcolor{pink!25}\textbf{1.90} & \underline{2.70} & 8.40 & 38.70 & 25.50 \\
    & NCal-101 
    & \cellcolor{pink!25}\underline{99.48} & \textbf{99.71} & 89.48 & 75.57 & 62.37 
    & \cellcolor{pink!25}\underline{99.82} & \textbf{99.90} & 95.94 & 87.67 & 78.90 
    & \cellcolor{pink!25}\underline{0.55} & \textbf{0.33} & 5.91 & 42.01 & 70.46 \\
    & DVS-Lip 
    & \cellcolor{pink!25}\textbf{99.34} & \underline{99.28} & 76.04 & 79.48 & 87.20 
    & \cellcolor{pink!25}\textbf{99.95} & \textbf{99.95} & 97.32 & 97.89 & 98.63 
    & \cellcolor{pink!25}\textbf{0.56} & \underline{1.57} & 25.55 & 35.58 & 13.63 \\
    \bottomrule
\end{tabular}
\end{adjustbox}
\end{table*}

As detailed in Table~\ref{tb:ablation}, \textbf{Static} consistently outperforms \textbf{Baseline} across all benchmarks. The performance drop in \textbf{Baseline} stems from inefficient information routing within random topologies, validating that mathematically grounded diffusion rules are essential for effective graph utilization to prevent network degradation. Furthermore, the performance drop in \textbf{Asym} compared to \textbf{Full} empirically validates our theoretical proposition: enforced symmetry is a prerequisite for ensuring the convexity of Dirichlet energy minimization and convergence stability.

Crucially, we observe that the efficacy of STSP is highly sensitive to the temporal nature of the data. While the \textbf{Full} model outperforms the \textbf{Static} variant on DVS-Gesture, incorporating STSP leads to performance degradation on N-Caltech101 and CIFAR10-DVS.
We attribute this to the intrinsic nature of the data: STSP is designed to capture evolving temporal semantics. Since N-Caltech101 and CIFAR10-DVS are derived from static images via saccades, they lack genuine temporal dynamics. Consequently, STSP captures spurious correlations rather than meaningful structural evolution, introducing optimization instability.

To validate this hypothesis, we extend our evaluation to the large-scale dynamic dataset UCF101-DVS, as detailed in Table~\ref{tab:ucf101-DVS}. The results corroborate our conjecture: the \textbf{Full} variant regains its superiority and outperforms the \textbf{Static} version by 1.31\%. Moreover, both variants significantly exceed the second-best method, TIM. This confirms that STSP confers critical advantages specifically when processing inputs with rich, genuine temporal dynamics, whereas a static graph suffices for static-to-event datasets.

\subsection{Open-world Adaptability}
 \begin{figure*}[t]
    \centering
    \includegraphics[width=0.9\textwidth]{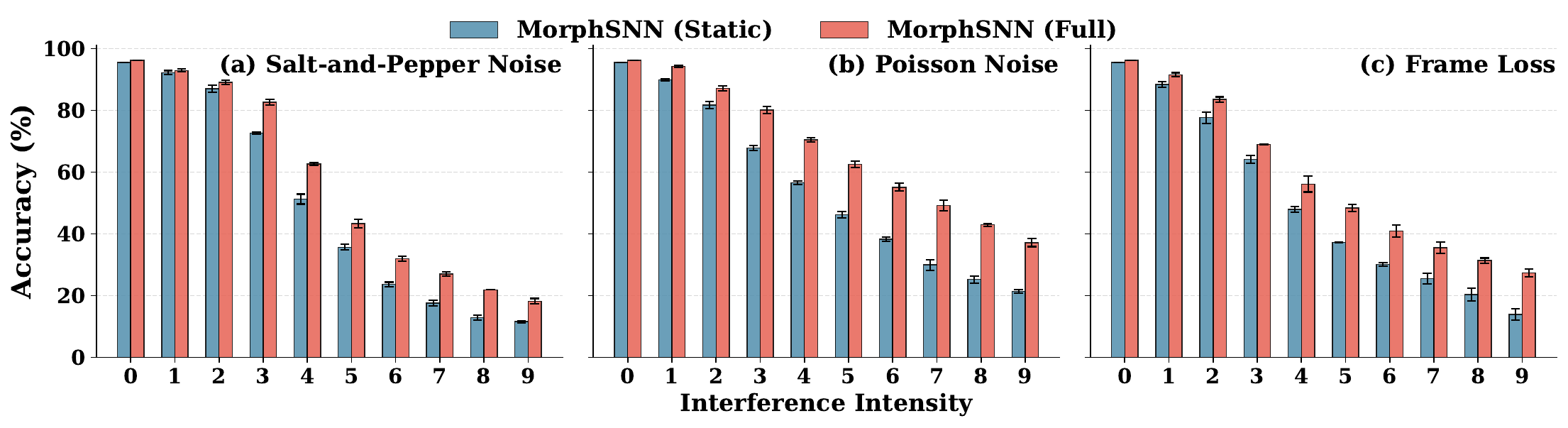}  
    \caption{Robustness evaluation of Static and Full models on DVS-Gesture under varying interference intensities.}
    \label{fig:robustness}
\end{figure*}

\subsubsection{OOD Detection}

To further validate the advantages of STSP in dynamic environments, we conduct OOD detection based on the \textbf{Structural Consistency Hypothesis}: ID samples activate topological patterns clustered around learned prototypes, whereas OOD inputs trigger divergent pathways. As Table~\ref{tb:OOD} shows, conventional methods (e.g., MSP, Energy) falter with AUROC scores of merely 75–85\% due to overconfident misclassifications, while SCP lacks the granularity for subtle semantic shifts.

To establish a performance upper bound, we introduce a non-parametric kNN baseline. While leveraging full geometric density achieves top-tier metrics, the $\mathcal{O}(N_{\text{train}})$ memory complexity of kNN is prohibitive for resource-constrained neuromorphic devices. In contrast, our Dynamic Graph Prototypes (DGP) method matches kNN’s precision with negligible overhead by compressing distributions into compact  $\mathcal{O}(N_{\text{class}})$ topological centroids.

On ID benchmarks, DGP consistently attains AUROC exceeding \textbf{99.2\%} and AUPR-Out over \textbf{99.7\%}. Crucially, DGP suppresses FPR95 to below \textbf{1.90\%}, notably dropping to \textbf{0.55\%} on N-Caltech101. Most notably, in the challenging Near-OOD DVS-Lip test, DGP achieves an FPR95 of \textbf{0.56\%}, significantly outperforming the 1.57\% upper bound set by kNN. It also maintains a distinct lead on CIFAR10-DVS (\textbf{1.90\%} vs. 2.70\%).

These results confirm that STSP-driven evolving topology serves as a robust, lightweight distribution descriptor capable of identifying even subtle semantic deviations. Biologically, this mechanism mimics stimulus-specific pathway recruitment, where the network learns to channel information through specialized synaptic routes. Consequently, ID samples follow these reinforced trajectories aligned with learned prototypes, while OOD inputs trigger unstructured activations that fail to match any established topological signature. Comprehensive descriptions of the DGP methodology are provided in Appendix~\ref{app:ood_detection}.

\subsubsection{Robustness Analysis}
To assess the robustness of the proposed STSP mechanism against real-world environmental volatility, we subject the model to comprehensive interference tests on DVS-Gesture. Specifically, we benchmark the \textbf{Full} MorphSNN against the \textbf{Static} baseline ($\beta=1$) under varying intensities of three perturbation protocols: Salt-and-Pepper Noise, Poisson Noise, and Frame Loss (details in the Appendix~\ref{app:robustness}).

As illustrated in Figure~\ref{fig:robustness}, while the performance of both models naturally declines with increasing perturbation intensity, the \textbf{Full} model exhibits significantly superior resilience compared to the \textbf{Static} version. Notably, under Salt-and-Pepper and Poisson noise, the \textbf{Full} model maintains a performance margin of approximately 10\% at high noise levels. We attribute this to the STSP mechanism: since transient noise spikes lack spatiotemporal consistency, they fail to accumulate sufficient trace values to consolidate into stable topological connections, thereby effectively blocking noise propagation.

Furthermore, under Frame Loss scenarios, the \textbf{Static} variant suffers from signal starvation due to zeroed-out inputs, leading to network silence. In contrast, the \textbf{Full} model exploits spatio-temporal compensation. Temporally, the hysteresis effect of synaptic traces preserves partial structural memory during input gaps; spatially, STSP adaptively redistributes weights to the remaining active nodes, ensuring continuous information flow. Consequently, MorphSNN effectively captures the robust underlying manifold features of neuromorphic streams, yielding enhanced interference resistance.

\section{Conclusion}

We present MorphSNN, a framework that transcends the limitations of static hierarchical SNNs by integrating \textbf{GD} and \textbf{STSP}. Theoretically, we prove that our diffusion-based propagation minimizes the Dirichlet energy, guaranteeing signal smoothness and stability. Empirically, MorphSNN achieves state-of-the-art performance across diverse neuromorphic and static classification benchmarks, demonstrating remarkable scalability to complex datasets like ImageNet. Crucially, the proposed STSP enables instance-specific topology evolution, exhibiting superior robustness against environmental interference compared to static baselines. Furthermore, we reveal that the evolved topologies encapsulate rich semantic information, facilitating near-perfect OOD detection without auxiliary training. Ultimately, MorphSNN suggests that neuromorphic intelligence emerges fundamentally from the fluidity of connectivity, advocating for a paradigm shift toward dynamic, self-organizing architectures in future brain-inspired computing.
\newpage
\section*{Impact Statement}
This paper presents work whose goal is to advance the field of Machine
Learning. There are many potential societal consequences of our work, none
which we feel must be specifically highlighted here.
\nocite{langley00}

\bibliography{example_paper}
\bibliographystyle{icml2026}

\newpage
\appendix
\onecolumn

\section{Pseudocode of DGD-SNN}
Due to space constraints, we present the algorithmic description of the DGD-SNN forward propagation here. Please refer to Algorithm~\ref{alg:algorithm}.
\label{app:algorithm}
\begin{algorithm}[tb]
    \caption{Forward Propagation of DGD-SNN}
    \label{alg:algorithm}
    \textbf{Input}: Input of source node $\mathbf{I}_{0}^{(t)}$, Initial Adjacency $\mathbf{S}^{(0)}$\\
    \textbf{Output}: Output $\mathbf{O}^{(t)}$ at timestep $t$
    \begin{algorithmic}[1] 
        \STATE Initialize $\mathbf{S}^{(0)}$ as an all-ones matrix
        \FOR{$ t = 1 $ to $T$}
        \item[] // \textit{\textbf{Phase 1: Calculate output of source node}}
        \STATE $\mathbf{O}_0^{(t)} \leftarrow AvgP(\text{ConvBNSN}_0(\mathbf{I}_0^{(t)}))$ in Eq.~\ref{eq5:source_node}
        \item[] // \textit{ \textbf{Phase 2: Update adjacency matrix using STSP}}
        \STATE Calculate hybrid state tensor $\tilde{\mathbf{O}}_i^{(t)}$ by Eq.~\ref{eq:hybrid_state}
        \STATE Calculate instantaneous attention adjacency probability matrix $\hat{\mathbf{A}}^{(t)}$ by combining Eqs.~\ref{eq10:project}--\ref{eq15:deltaS}
        \STATE Update the adjacency matrix: $\mathbf{S}^{(t)} \leftarrow\beta \cdot \mathbf{S}^{(t-1)}+(1-\beta) \cdot \hat{\mathbf{A}}^{(t)}$
        \STATE Execute pruning: $\hat{\mathbf{S}}^{(t)} \leftarrow \text{Top}_{k}(\mathbf{S}^{(t)})$ by Eq.~\ref{eq17:pruning} 
        \item[] // \textit{ \textbf{Phase 3: Execute GD}}
        \STATE Calculate the diffusion operation $\mathbf{P}^{(t)}$ by Eq.~\ref{eq19:operator}
        \STATE Calculate the inputs of others nodes $\mathbf{I}^{(t)}_{1\le i<N}$ by combining Eq.~\ref{eq18:x} and~\ref{eq20:diffusion}
        \item[] // \textit{ \textbf{Phase 4: Calculate total output}}
        \STATE Calculate the outputs of other nodes  $\mathbf{O}^{(t)}_{1\le i<N}$ by Eq.~\ref{eq:feature_extract_new}
        \STATE Aggregate the outputs from all nodes $\mathbf{O}^{(t)}_{0\le i<N}$ to obtain $\mathbf{O}^{(t)}$ by Eq.~\ref{eq9:total_output}
        \ENDFOR 
        \STATE \textbf{return} $\mathbf{O}^{(t)}$
    \end{algorithmic}
\end{algorithm}

\section{Theoretical Analysis and Proofs}
\label{app:theoretical}

\paragraph{Notation Clarification.} 
To maintain mathematical rigor and standard conventions within the spectral analysis, we clarify the following definitions and symbols used in this appendix:
\begin{itemize}
    \item \textbf{Strict Graph Definitions:} To ensure consistency with the gradient derivations, the degree matrix $\mathbf{D}$ is strictly defined based on the symmetrized adjacency matrix $\mathbf{S}_{sym}^{(t)} = (\hat{\mathbf{S}}^{(t)} + (\hat{\mathbf{S}}^{(t)})^\top)/2$. Specifically, $\mathbf{D}_{ii} = \sum_j (\mathbf{S}_{sym}^{(t)})_{ij}$. Consequently, the diffusion operator is $\mathbf{P} = \mathbf{D}^{-1/2}\mathbf{S}_{sym}^{(t)}\mathbf{D}^{-1/2}$, and the normalized Laplacian is $\mathbf{L}_{norm} = \mathbf{Id} - \mathbf{P}$.
    \item \textbf{Local Symbol Distinctions:} Certain symbols are defined \textbf{locally} for the spectral analysis and are distinct from global hyperparameters in the main text:
    \begin{itemize}
        \item The variable $t$ denotes the continuous physical diffusion time in the differential equation $\frac{d\mathbf{Y}}{dt} = -\mathbf{L}\mathbf{Y}$. It represents the extent of signal propagation depth. Theoretically, one discrete diffusion step (multiplication by $\mathbf{P}$) corresponds to advancing this physical time by a specific interval. This is distinct from the discrete inference timestep index used in the main methodology.
         \item The indices $k$ and $j$ denote the spectral mode or node indices, distinct from the Top-$k$ pruning parameter.
         \item The symbol $\lambda_k$ represents the $k$-th eigenvalue of the Laplacian, distinct from the synaptic trace decay factor $\lambda$ defined in Eq.~(\ref{eq11:trace}).
        \item The symbols $\alpha_k$ and $\beta_k$ represent the real and imaginary parts of eigenvalues, respectively, and are independent of the momentum coefficient $\beta$.
        \item The vector $\mathbf{y}$ (and matrix $\mathbf{Y}$) represents a general graph signal variable in the continuous state space, distinct from specific layer outputs.
    \end{itemize}
\end{itemize}

\subsection{Derivation of the Normalized Dirichlet Energy}
Before proving the propositions, we first clarify the derivation of the Dirichlet energy form presented in Eq.~(\ref{eq22:dirichlet}). The standard combinatorial Dirichlet energy measures the smoothness of absolute signal differences. However, in our diffusion framework, which involves degree-normalization steps ($\mathbf{D}^{-1/2}$), the appropriate physical quantity is the \textit{normalized} smoothness.

The normalized Dirichlet energy measures the weighted sum of squared differences between the degree-normalized signals of connected nodes:
\begin{align}
    \mathcal{E}_{Dir}(\mathbf{Y}) &= \frac{1}{2} \sum_{i,j} (\mathbf{S}_{sym}^{(t)})_{ij} \left\| \frac{\mathbf{y}_i}{\sqrt{d_i}} - \frac{\mathbf{y}_j}{\sqrt{d_j}} \right\|_2^2.
\end{align}
Expanding the quadratic term, we obtain:
\begin{align}
    \mathcal{E}_{Dir}(\mathbf{Y}) &= \frac{1}{2} \sum_{i,j} (\mathbf{S}_{sym}^{(t)})_{ij} \left( \frac{\mathbf{y}_i^\top \mathbf{y}_i}{d_i} - 2\frac{\mathbf{y}_i^\top \mathbf{y}_j}{\sqrt{d_i d_j}} + \frac{\mathbf{y}_j^\top \mathbf{y}_j}{d_j} \right) \nonumber \\
    &= \frac{1}{2}\sum_{i} \frac{\mathbf{y}_i^\top \mathbf{y}_i}{d_i} \underbrace{\sum_{j} (\mathbf{S}_{sym}^{(t)})_{ij}}_{d_i} - \sum_{i,j} \mathbf{y}_i^\top \frac{(\mathbf{S}_{sym}^{(t)})_{ij}}{\sqrt{d_i d_j}} \mathbf{y}_j + \frac{1}{2}\sum_{j} \frac{\mathbf{y}_j^\top \mathbf{y}_j}{d_j} \underbrace{\sum_{i} (\mathbf{S}_{sym}^{(t)})_{ij}}_{d_j} \nonumber \\
    &= \frac{1}{2}\sum_{i} \mathbf{y}_i^\top \mathbf{y}_i - \operatorname{tr}\left( \mathbf{Y}^\top (\mathbf{D}^{-1/2}\mathbf{S}_{sym}^{(t)}\mathbf{D}^{-1/2}) \mathbf{Y} \right) + \frac{1}{2}\sum_{j} \mathbf{y}_j^\top \mathbf{y}_j \nonumber \\
    &= \operatorname{tr}(\mathbf{Y}^\top \mathbf{Id} \mathbf{Y}) - \operatorname{tr}(\mathbf{Y}^\top \mathbf{P} \mathbf{Y}) \nonumber \\
    &= \operatorname{tr}\left( \mathbf{Y}^\top (\mathbf{Id} - \mathbf{P}) \mathbf{Y} \right) = \operatorname{tr}\left(\mathbf{Y}^\top \mathbf{L}_{norm} \mathbf{Y}\right).
\end{align}
This derivation confirms that minimizing the trace form in Eq.~(\ref{eq22:dirichlet}) is mathematically equivalent to enforcing local smoothness on the degree-normalized graph signals.

\subsection{Proofs}

\textbf{Proposition 1.} \textit{Symmetry of $\mathbf{P}^{(t)}$ is sufficient for the diffusion to constitute a valid gradient flow on $\mathcal{E}_{\text {Dir }}$. This ensures real spectra, eliminating non-physical oscillations and guaranteeing stability of the gradient flow dynamics.}

\textbf{Proof.}
The dynamics of the diffusion process are governed by the linear differential equation $\frac{d\mathbf{Y}}{dt} = -\mathbf{L}\mathbf{Y}$. Ideally, for this to represent a gradient flow on the energy landscape $\mathcal{E}_{Dir}(\mathbf{Y})$, the driving force $-\mathbf{L}\mathbf{Y}$ must align with the negative gradient direction $-\nabla_{\mathbf{Y}}\mathcal{E}_{Dir}$.
Since $\mathcal{E}_{Dir}$ is a quadratic form, its gradient is determined solely by the symmetric part of the Laplacian, $\mathbf{L}_{sym} = (\mathbf{L} + \mathbf{L}^\top)/2$.

\textit{Geometric Decomposition Analysis:}
Any square matrix $\mathbf{L}$ can be decomposed into a symmetric component $\mathbf{L}_{sym}$ and a skew-symmetric component $\mathbf{L}_{skew} = (\mathbf{L} - \mathbf{L}^\top)/2$. The diffusion dynamics can thus be split into two orthogonal fields:
\begin{equation}
    \frac{d\mathbf{Y}}{dt} = -\underbrace{\mathbf{L}_{sym}\mathbf{Y}}_{\text{Gradient Descent}} - \underbrace{\mathbf{L}_{skew}\mathbf{Y}}_{\text{Rotational Flow}}.
\end{equation}
The first term minimizes the energy. The second term, however, represents a conservative rotational flow that moves the state along the level sets of the energy function without reducing it (since $\mathbf{Y}^\top \mathbf{L}_{skew} \mathbf{Y} = 0$). If the graph is not symmetrized ($\mathbf{L}_{skew} \neq \mathbf{0}$), this rotational component introduces non-vanishing circulation in the phase space.

\textit{Spectral Interpretation:}
This geometric rotation manifests spectrally as complex eigenvalues $\lambda_k = \alpha_k + i\beta_k$ ($\beta_k \neq 0$). The time-evolution of the signal then becomes:
\begin{equation}
    \mathbf{y}_k(t) \propto e^{-\alpha_k t} (\cos(\beta_k t) - i \sin(\beta_k t)).
\end{equation}
The term $\sin(\beta_k t)$ creates non-physical oscillations. In SNNs, such oscillations are detrimental as they cause the membrane potential to fluctuate arbitrarily, potentially crossing the firing threshold $V_{th}$ due to topological asymmetry rather than feature accumulation. Therefore, enforcing symmetry ($\mathbf{L}_{skew} = \mathbf{0}$) is necessary to eliminate the rotational flow, ensuring $\beta_k = 0$ and guaranteeing that the diffusion is a pure gradient descent process with monotonic stability. \hfill $\square$

\textbf{Theorem 4.2.} \textit{The GD functions as a gradient descent step on $\mathcal{E}_{Dir}$ with the learning rate $\eta=0.5$. This represents the maximum stable step size, guaranteeing monotonic convergence to the energy equilibrium.}

\textbf{Proof.}
Using the trace form derived above and applying standard matrix calculus identities, the gradient of the energy is $\nabla_{\mathbf{Y}} \mathcal{E}_{Dir} = (\mathbf{L}_{norm} + \mathbf{L}_{norm}^\top)\mathbf{Y}$. Since $\mathbf{L}_{norm}$ is symmetric, this simplifies to:
\begin{equation}
    \nabla_{\mathbf{Y}} \mathcal{E}_{Dir} = 2\mathbf{L}_{norm}\mathbf{Y}.
\end{equation}
A discrete gradient descent step with learning rate $\eta$ is:
\begin{equation}
    \mathbf{Y}_{new} = \mathbf{Y}_{old} - \eta (2\mathbf{L}_{norm}\mathbf{Y}_{old}) = (\mathbf{Id} - 2\eta \mathbf{L}_{norm}) \mathbf{Y}_{old}.
\end{equation}
Substitute $\mathbf{L}_{norm} = \mathbf{Id} - \mathbf{P}$:
\begin{align}
    \mathbf{Y}_{new} &= (\mathbf{Id} - 2\eta (\mathbf{Id} - \mathbf{P})) \mathbf{Y}_{old} \nonumber \\
    &= ((1 - 2\eta)\mathbf{Id} + 2\eta\mathbf{P}) \mathbf{Y}_{old}.
\end{align}
Setting $\eta = 0.5$ eliminates the self-loop term $((1-1)\mathbf{Id})$, yielding $\mathbf{Y}_{new} = \mathbf{P}\mathbf{Y}_{old}$, which recovers our diffusion operator exactly.

\textit{Stability Boundary:}
The eigenvalues of $\mathbf{L}_{norm}$ are in $[0, 2]$. For the update $\mathbf{Y} \leftarrow (\mathbf{Id} - \mathbf{L}_{norm})\mathbf{Y}$ (i.e., $\eta=0.5$) to be non-divergent, the amplification factor $1 - \lambda(\mathbf{L}_{norm})$ must be in $[-1, 1]$. Since $\lambda(\mathbf{L}_{norm}) \in [0, 2]$, the factor lies in $[1-2, 1-0] = [-1, 1]$. Thus, $\eta=0.5$ is the largest possible learning rate that maintains stability. Specifically, at the bipartite limit $\lambda=2$, the factor is $-1$, corresponding to stable oscillation, whereas $\eta > 0.5$ would cause divergence. \hfill $\square$

\section{OOD Detection}
\label{app:ood_detection}
\subsection{Method}
We propose \textbf{Dynamic Graph Prototypes (DGP)}, an intrinsic OOD detection metric grounded in the \textit{Structural Consistency Hypothesis}: In-Distribution (ID) samples induce regular connectivity patterns via STSP, whereas OOD samples trigger disordered topologies.

To quantify this, we define the topological signature $\mathbf{z}$ of an input $\mathbf{x}$ by flattening the sequence of dynamic adjacency matrices evolved over $T$ timesteps:
\begin{align}
    \mathbf{z} = \Phi(\mathbf{x}) = \text{Flatten}\left( \left\{ \mathbf{S}^{(t)} \right\}_{t=1}^T \right) \in \mathbb{R}^{TN^2},
    \label{eq:feature_extraction}
\end{align}
where $\Phi$ serves as the feature extractor mapping the dynamic structure into a high-dimensional vector.

Post-training, we characterize the structural manifold of each ID class $n_{\text{class}}$ using the fixed backbone network. Given the ID training dataset $\mathcal{D}_{in} = \{(\mathbf{x}_i, y_i)\}_{i=1}^{N_{\text{train}}}$, where $y_i \in \{1, \dots, N_{\text{class}}\}$, we compute the prototypes $\boldsymbol{\mu}_{n_{\text{class}}}$ for each class, defined as the centroid of the structural features:
\begin{align}
    \boldsymbol{\mu}_{n_{\text{class}}} = \frac{1}{|\mathcal{D}_{in}^{n_{\text{class}}}|} \sum_{(\mathbf{x}, y) \in \mathcal{D}_{in}^{n_{\text{class}}}} \Phi(\mathbf{x}),
    \label{eq:prototype}
\end{align}
where $\mathcal{D}_{in}^{n_{\text{class}}}$ denotes the subset of training samples labeled as $n_{\text{class}}$. These prototypes serve as anchors for the known structural distributions.

During inference, to quantify the degree to which a test sample $\mathbf{x}^*$ is out-of-distribution, we measure the deviation of its structural pattern from the established ID manifolds. The anomaly score $S_{\text{DGP}}(\mathbf{x}^*)$ is defined as the Euclidean distance to the nearest graph prototype:
\begin{align}
    S_{\text{DGP}}(\mathbf{x}^*) = \min_{n_{\text{class}} \in \{1, \dots, N_{\text{class}}\}} \| \Phi(\mathbf{x}^*) - \boldsymbol{\mu}_{n_{\text{class}}} \|_2.
    \label{eq:anomaly_score}
\end{align}
A sample is classified as OOD if $S_{\text{DGP}} > \kappa$, where $\kappa$ is set to the 95th percentile of anomaly scores calculated on the ID training set, thereby maintaining a 95\% True Positive Rate (TPR) for known data.

\subsection{Baselines}
\label{app:ood_method}
\textbf{Maximum Softmax Probability (MSP).}  As a baseline method for OOD detection proposed by~\citet{hendrycks2016baseline}, MSP relies on the hypothesis that classifiers tend to assign higher prediction confidence to ID samples compared to OOD ones. The method utilizes the maximum probability from the softmax output layer directly as the anomaly detection score.

\textbf{Energy-based Score (Energy).} ~\citet{liu2020energy} proposed using Free Energy as a detection metric. Unlike softmax-based approaches that focus on relative probabilities, the energy score intrinsically reflects the log-likelihood relationship between the input sample and the probability density. Generally, ID samples exhibit lower energy levels, whereas OOD samples possess significantly higher energy.

\textbf{Deep KNN.} ~\citet{sun2022out} proposed a non-parametric approach that leverages the geometric information in the feature space. By computing the Euclidean distance between a test sample's embedding and its $k$-th nearest neighbor in the training set, Deep KNN utilizes feature space proximity as a direct uncertainty measure, effectively distinguishing OOD samples that lie far from the ID data manifold.

\textbf{Spike Count Pattern (SCP).} This serves as a feature-space ablation baseline. ~\citet{martinez2023novel} employed the same prototype distance metric mechanism as our proposed DGP method. However, it distinguishes itself by utilizing the mean firing rate of neurons in the penultimate layer as the feature vector. By contrasting SCP with DGP, we verify that the performance gains stem fundamentally from the topological evolution of the graph structure rather than merely from internal spiking statistical patterns.

\subsection{Dataset}
\textbf{DVS-Gesture.} Please refer to Appendix~\ref{app:dataset} for a detailed description. Serving as the ID dataset, we employ 288 samples from its training set as the ID samples.

\textbf{CIFAR10-DVS.} Please refer to Appendix~\ref{app:dataset} for a detailed description. In our OOD detection experiments, we utilize the entire test set of 1,000 samples as the OOD samples.

\textbf{NCaltech101.} Please refer to Appendix~\ref{app:dataset} for a detailed description. In our OOD detection experiments, we utilize the entire test set of 914 samples as the OOD samples.

\textbf{DVS-Lip.} We utilize the DVS-Lip dataset~\citep{tan2022multi} to establish a Near-OOD benchmark that rigorously tests robustness against semantic shifts within a consistent sensor domain. The dataset contains roughly 20,000 samples recorded with the same DVS128 sensor used for the in-distribution DVS Gesture task. This shared hardware setup forces the model to distinguish samples based on structural semantics rather than low-level sensor statistics or resolution artifacts. For the OOD evaluation, we employ the test set of 4,975 samples as the OOD data.

\subsection{Metrics}
\textbf{AUROC.} The Area Under the Receiver Operating Characteristic curve (AUROC) provides a threshold-independent assessment of the detector's overall discriminative capability. Defined as the area under the curve plotting True Positive Rate (TPR) against False Positive Rate (FPR), it intuitively represents the probability that the model assigns a higher anomaly score to a randomly selected OOD sample than to a random ID sample. A higher AUROC indicates better global separability between the two distributions.

\textbf{AUPR-Out.} The Area Under the Precision-Recall curve (AUPR-Out) serves as a complementary metric that evaluates detection performance by treating OOD samples as the positive class. Unlike AUROC, AUPR focuses specifically on the trade-off between detection precision and recall. This makes it a more rigorous indicator of reliability in scenarios with class imbalance, ensuring that the model achieves high detection accuracy without incurring a disproportionate number of false alarms.

\textbf{FPR95.} The False Positive Rate at $95\%$ True Positive Rate (FPR95) acts as a critical operational safety metric. It quantifies the percentage of ID samples that are wrongly classified as OOD when the decision threshold is calibrated to maintain a 95\% detection rate for OOD data. A lower FPR95 indicates the model's robustness in filtering out anomalies under strict constraints to guarantee high availability for normal inputs.

\section{Hyperparameter Sensitivity Analysis}
\label{app:sensitivity}

\begin{figure}[htbp]
    \centering
    \begin{subfigure}[b]{0.19\textwidth}
        \includegraphics[width=\textwidth]{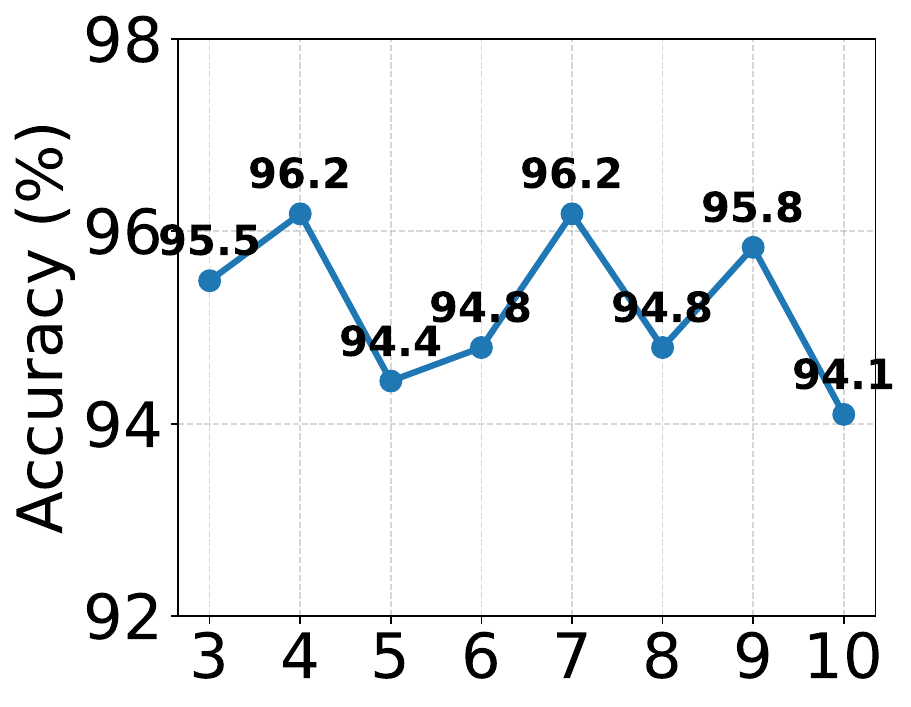}
        \caption{Number of Nodes $N$.} 
        \label{fig:N}
    \end{subfigure}
    \hfill
    \begin{subfigure}[b]{0.19\textwidth}
        \includegraphics[width=\textwidth]{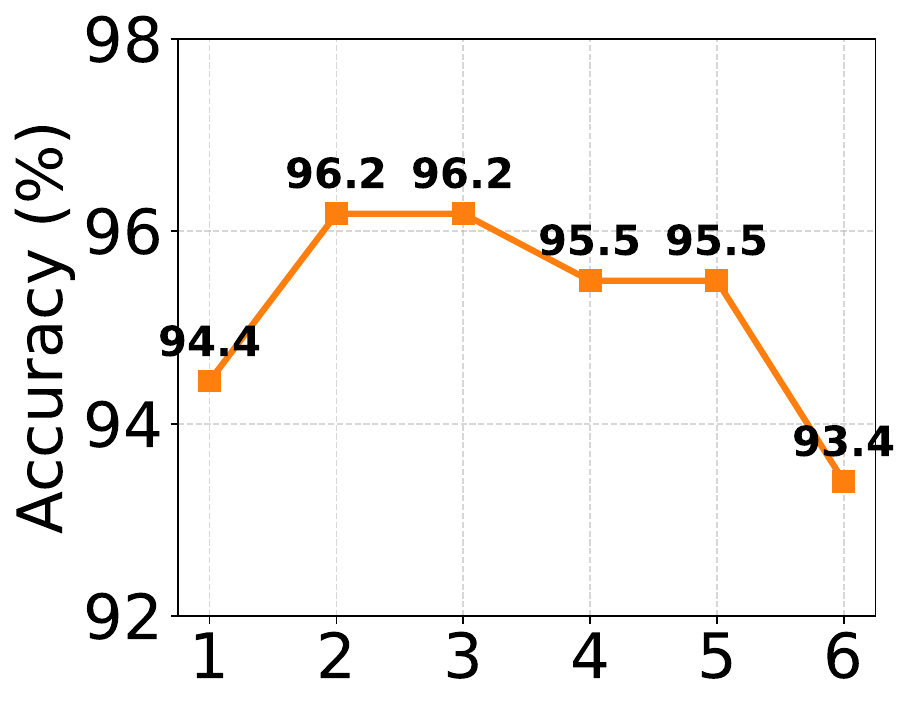}
        \caption{Diffusion Steps $M$.}
        \label{fig:M}
    \end{subfigure}
    \hfill
    \begin{subfigure}[b]{0.19\textwidth}
        \includegraphics[width=\textwidth]{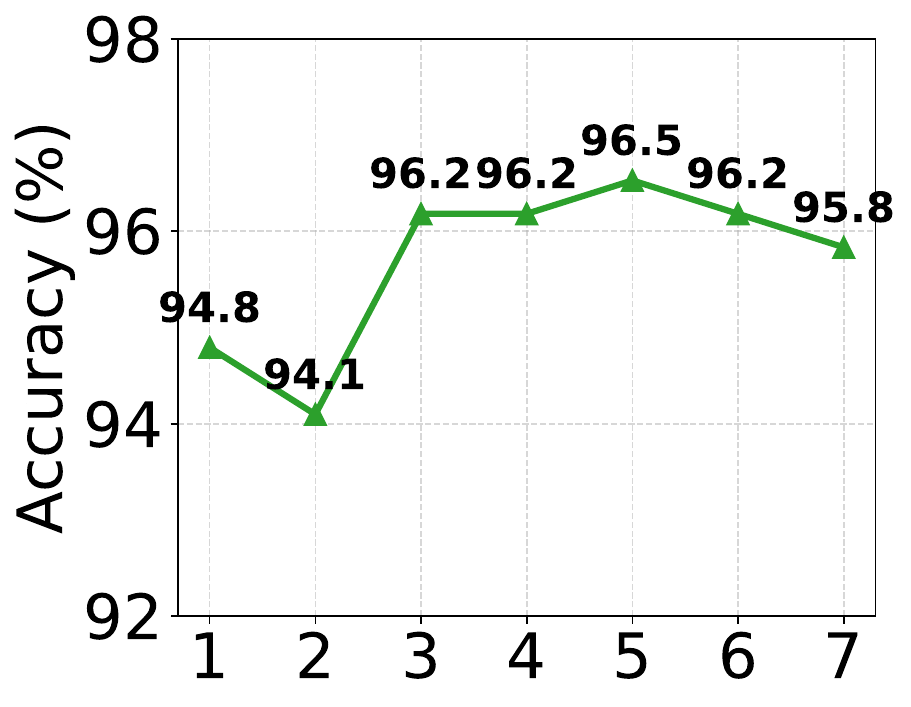}
        \caption{Top-$k$ Pruning $k$.}
        \label{fig:K}
    \end{subfigure}
    \hfill
    \begin{subfigure}[b]{0.19\textwidth}
        \includegraphics[width=\textwidth]{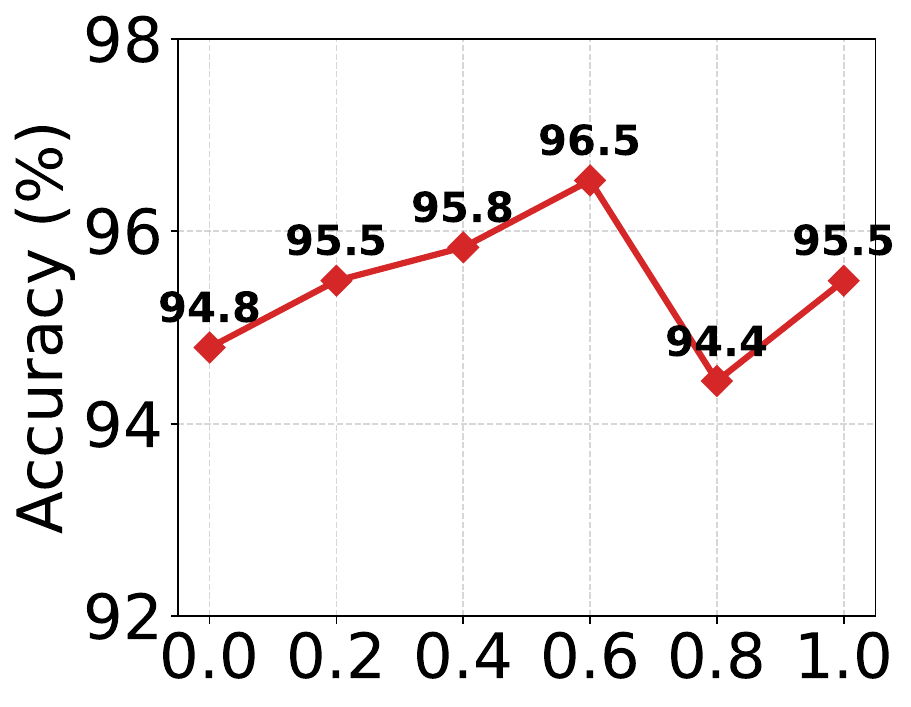}
        \caption{Trace Momentum $\lambda$.}
        \label{fig:lambda}
    \end{subfigure}
    \hfill
    \begin{subfigure}[b]{0.19\textwidth}
        \includegraphics[width=\textwidth]{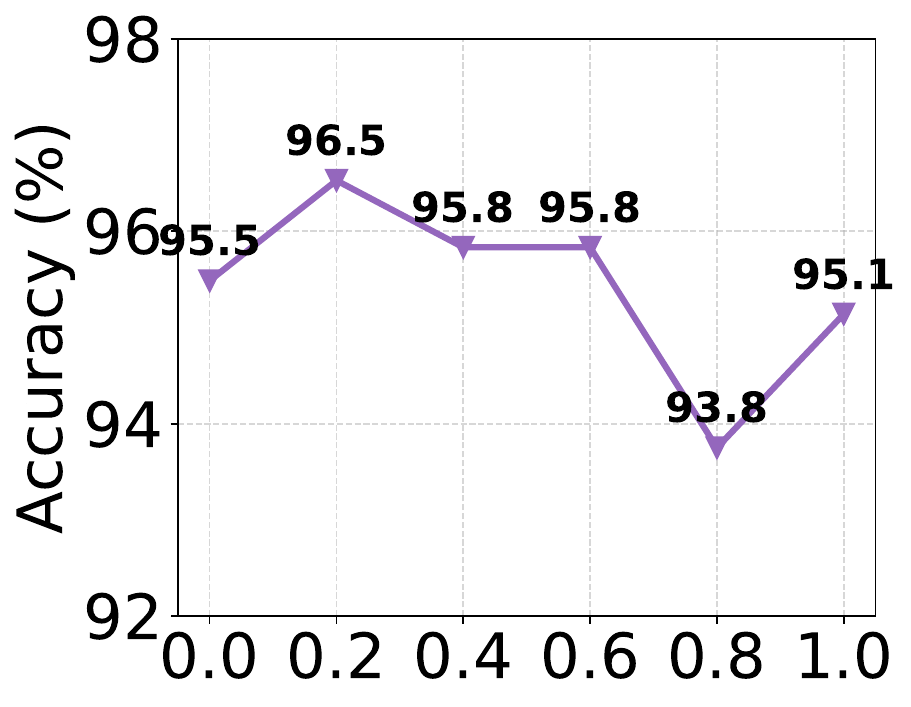}
        \caption{Update Momentum $\beta$.}
        \label{fig:beta}
    \end{subfigure}
    
    \caption{Hyperparameter sensitivity analysis on DVS-Gesture.}
    \label{fig:sensitivity}
\end{figure}



To investigate the impact of key configurations, we conduct a comprehensive sensitivity analysis on the DVS-Gesture under $T=5$. Unless otherwise specified, the default settings are: $N=7$, $M=2$, $K=3$, $\lambda=0.6$, and $\beta=0.2$.

\textbf{Analysis of $N$.} The number of nodes $N$ determines the model's capacity for feature abstraction and topological complexity. While small $N$ limits representational ability, excessive $N$ imposes computational overhead. As shown in Figure~\ref{fig:N}, the model achieves peak accuracy (96.2\%) at both $N=4$ and $N=7$. We select $N=7$ to allow for richer graph structural variations and to ensure consistency with configurations used for larger datasets.
 
\textbf{Analysis of $M$.} The diffusion step $M$ regulates the range of inter-node information exchange. Insufficient $M$ restricts signal propagation, whereas excessive $M$ leads to the over-smoothing problem, where node features become indistinguishable. Figure~\ref{fig:M} indicates optimal performance at $M=2$ and $M=3$. We adopt $M=2$ for computational efficiency. Notably, performance drops significantly at $M=6$, confirming that excessive diffusion dilutes the discriminative power of local features.

\textbf{Analysis of $k$.} This parameter controls the sparsity of the dynamic adjacency matrix. $k=7$ corresponds to a fully connected (non-pruned) graph, while $k=1$ forces a rigid tree-like structure. Results in Figure~\ref{fig:K} demonstrate that $k=5$ yields the highest performance. The degradation observed at $k=7$ validates the necessity of sparse pruning to remove noisy or redundant connections. Consequently, we fix $k=5$ for the subsequent analyses of momentum terms.

\textbf{Analysis of $\lambda$.} The synaptic trace momentum $\lambda$ governs the trade-off between historical memory and instantaneous sensory input during synaptic trace calculation. As illustrated in Fig.~\ref{fig:lambda}, we evaluate $\lambda$ across the range $\left \{ 0.0,0.2,0.4,0.6, 0.8,1.0\right \}$. The model achieves optimal performance, 96.5\% at $\lambda = 0.6$, Notably, $\lambda = 0.0$ and $\lambda =1.0$ correspond to reliance solely on instantaneous input and historical memory, respectively. The suboptimal performance observed at both boundaries verifies the necessity of effective spatio-temporal fusion.

 \textbf{Analysis of $\beta$.} The synaptic update momentum $\beta$ directly regulates the stability of the evolving graph topology. As shown in Fig.~\ref{fig:beta}, we test $\beta$ values within $\left \{ 0.0,0.2,0.4,0.6, 0.8,1.0\right \}$, observing peak performance at $\beta = 0.2$. $\beta = 0$ implies the graph connectivity is entirely determined by the instantaneous update, while $\beta =1 $ degenerates the network into a static topology. The performance degradation at both extremes confirms the critical role of controlled dynamic evolution in capturing time-varying features.

\section{Experiment Description and Dataset Pre-processing}
\label{app: experiment}

\subsection{Architecture Detail}
Table~\ref{tab:architecture} details the specific network configurations across diverse datasets. To accommodate varying task complexities, we design scalable MorphSNN variants ranging from lightweight to deep architectures.

\textbf{Neuromorphic Datasets.} For low-latency gesture recognition (DVS-Gesture), we employ an extremely lightweight 3-stage architecture with only \textbf{0.20 M} parameters, demonstrating the model's efficiency. For more challenging event streams (e.g., CIFAR10-DVS, NCaltech101, UCF101-DVS), we uniformly expand the channel width to 156 to capture fine-grained spatiotemporal patterns. 

\textbf{Static Image Datasets.} We adopt a pyramidal structure similar to standard CNNs. Notably, for the large-scale \textbf{ImageNet} dataset, the architecture is scaled up to 4 stages with \textbf{29.89 M} parameters and a $7\times7$ input stem, ensuring sufficient capacity for high-dimensional feature representation. 

\begin{table}[h]
\centering
\caption{Comparison of network architectures across all datasets}
\label{tab:architecture}
\renewcommand{\arraystretch}{1.4}

\newcommand{\highrow}{\rule[-2.5em]{0pt}{5.0em}}

\resizebox{\linewidth}{!}{
\begin{tabular}{c|c|c|c|c|c}
\hline
\multirow{3}{*}{\textbf{Stage}} & \multicolumn{3}{c|}{\textbf{Neuromorphic Datasets}} & \multicolumn{2}{c}{\textbf{Static Image Datasets}} \\ \cline{2-6} 
 & \textbf{DVS-Gesture} & \textbf{C10DVS / NCal101} & \textbf{UCF101-DVS} & \textbf{CIFAR-10 / 100} & \textbf{ImageNet} \\ \cline{2-6} 
 & (Input: $128\times128$) & (Input: $128\times128$) & (Input: $64\times64$) & (Input: $32\times32$) & (Input: $224\times224$) \\ \hline

Encoder & \{3x3, 32, AP\} & \{3x3, 156, AP\} & \{3x3, 156\} & \{3x3, 64\} & \{7x7, 64, S2, P3, AP\} \\ \hline

1 \highrow & 
$\left[ \begin{array}{c} 3\text{x}3, 32 \end{array} \right] \times 7$ & 
\multicolumn{2}{c|}{$\left[ \begin{array}{c} 3\text{x}3, 156 \end{array} \right] \times 7$} & 
$\left[ \begin{array}{c} 3\text{x}3, 128 \end{array} \right] \times 4$  & 
$\left[ \begin{array}{c} 3\text{x}3, 64 \end{array} \right] \times 8$ \\ \hline

2 \highrow & 
$\left[ \begin{array}{c} 3\text{x}3, 32 \end{array} \right] \times 7$ & 
\multicolumn{2}{c|}{$\left[ \begin{array}{c} 3\text{x}3, 156 \end{array} \right] \times 7$} & 
$\left[ \begin{array}{c} 3\text{x}3, 256 \end{array} \right] \times 4$  & 
$\left[ \begin{array}{c} 3\text{x}3, 128 \end{array} \right] \times 8$ \\ \hline

3 \highrow & 
$\left[ \begin{array}{c} 3\text{x}3, 32 \end{array} \right] \times 7$ & 
\multicolumn{2}{c|}{$\left[ \begin{array}{c} 3\text{x}3, 156 \end{array} \right] \times 7$} & 
$\left[ \begin{array}{c} 3\text{x}3, 512 \end{array} \right] \times 4$  & 
$\left[ \begin{array}{c} 3\text{x}3, 256 \end{array} \right] \times 16$ \\ \hline

4 \highrow & 
- & \multicolumn{2}{c|}{-} & 
- & 
$\left[ \begin{array}{c} 3\text{x}3, 512 \end{array} \right] \times 8$ \\ \hline

Feat. Size & 
$1\times1$ & \multicolumn{2}{c|}{$1\times1$} & $4\times4$ (then GAP) & $7\times7$ (then GAP) \\ \hline

FC & FC-11 & FC-10 / 101 &FC-101 & FC-10/100 & FC-1000 \\ \hline

Param (M) & 0.20  & \multicolumn{2}{c|}{4.72 } & 10.85  & 29.89  \\ \hline

\end{tabular}
}
\end{table}

\subsection{Training Configuration}
Table~\ref{tb:hyperparams} presents the comprehensive hyperparameter settings across all datasets. To adapt to the distinct data modalities, we employ differentiated optimization strategies. For static image datasets such as CIFAR and ImageNet, models are optimized using \textbf{AdamW} with a weight decay of 0.05 and larger batch sizes ranging from 64 to 256 under a uniform Cosine Annealing scheduler. In contrast, for neuromorphic datasets, we utilize \textbf{SGD} with zero weight decay to effectively preserve the sparsity of spiking activities. Due to the high memory overhead associated with multi-timestep processing, the batch size is restricted to 8, while learning rate schedules including Cosine or Step modes are empirically tailored to match the specific convergence profile of each event-driven task. To mitigate statistical variance, all experiments, with the exception of the large-scale ImageNet and UCF101-DVS datasets, are conducted across three independent runs, with the results reported as mean $\pm$ standard deviation.

\begin{table}[h]
    \centering
    \caption{Hyperparameter settings for different datasets.}
    \label{tb:hyperparams}
    
    \resizebox{0.9\linewidth}{!}{
        \begin{tabular}{lcccccc} 
            \toprule
            \textbf{Dataset} & \textbf{Optimizer} & \textbf{Weight Decay} & \textbf{Batch Size} & \textbf{Epochs} & \textbf{Init. LR} & \textbf{Schedule} \\
            \midrule
            
            CIFAR-10     & AdamW & 0.05 & 64 & 400 & $1 \times 10^{-3}$ & Cosine \\ 
            CIFAR-100    & AdamW & 0.05 & 64 & 400 & $1 \times 10^{-3}$ & Cosine \\
            ImageNet     & AdamW & 0.05 & 256 & 300 & $6 \times 10^{-4}$ & Cosine \\
            
            \midrule 
            
            DVS-Gesture  & SGD   & 0    & 8  & 300 & $1 \times 10^{-3}$ & Cosine \\
            CIFAR10-DVS  & SGD   & 0    & 8  & 192 & $1 \times 10^{-3}$ & Step \\
            N-Caltech101 & SGD   & 0    & 8  & 192 & $1 \times 10^{-3}$ & Step \\
            UCF101-DVS   & SGD   & 0    & 8  & 192 & $1 \times 10^{-3}$ & Step \\
            
            \bottomrule
        \end{tabular}
    }
\end{table}

\subsection{Datasets}
\label{app:dataset}

\textbf{CIFAR-10 $\&$ CIFAR-100.} CIFAR-10 consists of 60,000 color RGB images with a resolution of $32 \times 32$ pixels, distributed across 10 classes. Its more challenging counterpart, CIFAR-100, maintains the same total number of images and resolution but expands to 100 classes, resulting in a ten-fold reduction in sample size per class. Both datasets are split into 50,000 training samples and 10,000 test samples. We apply standard data augmentation and preprocessing protocols, consistent with the settings in the work~\cite{zhou2024qkformer}. For all static datasets, we employ a direct coding strategy, where the static image is repeated over 4 timesteps and fed directly into the first layer of the network.
 
\textbf{ImageNet-1k}~\citep{russakovsky2015imagenet}. As one of the most large-scale datasets in the field of image classification, ImageNet-1k serves as a primary benchmark for evaluating the final performance of neural network architectures. The dataset spans 1,000 diverse object categories, comprising approximately 1.28 million training images and 50,000 validation images. Following the experimental settings in~\cite{zhou2024qkformer}, we utilize the validation set for performance reporting and comparisons, and all input images are resized to the standard resolution of $224\times  224$

\textbf{DVS-Gesture}~\citep{2017_amir_low}. As an event-based action recognition benchmark, DVS-Gesture serves as the primary dataset for validating our proposed STSP mechanism. Captured by a $128 \times 128$ resolution DVS camera, it includes 11 hand gestures (e.g., waving, clapping) performed by 29 subjects under three distinct lighting conditions. Following the protocol established by~\citet{fang2021incorporating}, we partition the 1,464 samples into a training set of 1,176 samples and a test set of 288 samples. The event streams are fed into the network at their original $128\times128$ resolution without any additional data augmentation or preprocessing.

\textbf{CIFAR10-DVS}~\citep{2017_li_cifar10}. This dataset represents the neuromorphic conversion of CIFAR-10. It was generated by recording 10,000 static images from CIFAR-10 moving along repeated trajectories on a monitor using a DVS128 sensor. We maintain the original resolution of $128\times128$ for network input and split the dataset into training and validation sets with a 9:1 ratio. Standard SNN data augmentation techniques, including RandomCrop, RandomHorizontalFlip, and RandomRotation, are applied.

\textbf{NCaltech101}~\citep{2015_orchard_converting}. N-Caltech101 is the neuromorphic version of the static Caltech101 dataset, generated by mounting an event camera on a motorized pan-tilt unit to record saccadic movements across static images displayed on an LCD screen. It comprises 8,246 event stream samples covering 101 object categories and 1 background category, with an original resolution of $180\times240$. Following standard SNN practices, we discard the background class and downsample all samples to $128\times128$ using bilinear interpolation. The dataset split ratio and augmentation strategies are identical to those used for CIFAR10-DVS.
 
\textbf{UCF101-DVS}~\citep{bi2020graph}. Standing as one of the largest-scale action recognition benchmarks in SNNs, UCF101-DVS is a neuromorphic conversion of the origin RGB UCF101 dataset, generated by recording monitor displays via a DVS sensor. It comprises 13,320 event stream samples across 101 action categories with an original resolution of $240\times180$. We incorporate this benchmark specifically to highlight the efficacy of the STSP mechanism in processing complex dynamic datasets. To ensure a fair comparison, we adhere to the protocol established in TIM, spatial resolution was downsampled to $64\times64$, and the dataset was partitioned into training and validation sets with a 9:1 ratio, accompanied by standard data augmentation techniques.

\subsection{Robustness Evaluation Protocols}
\label{app:robustness}
To comprehensively evaluate the robustness of MorphSNN against environmental interference, we introduce three types of perturbations: Salt-and-Pepper Noise, Poisson Noise, and Frame Loss. The severity of each perturbation is controlled by an integer intensity level $\rho \in \left \{ 0,1,2,3,4,5,6,7,8,9\right \}$. The specific implementation details are defined as follows:

\textbf{Salt-and-Pepper Noise.} 
This noise is designed to simulate sensor defects such as dead pixels. For a given input event frame, we treat $\rho \times 0.02$ proportion of pixels as noise regions. Specifically, each pixel is randomly flipped to 0 (no event) or 1 (event) with probability $\rho \times 0.02$, while the remaining pixels remain unchanged.

\textbf{Poisson Noise.} 
The introduction of Poisson noise serves to simulate the randomness of photon arrival and shot noise inherent to the sensor. We inject noise conforming to a Poisson distribution into the input frame, where the intensity $\rho$ acts as the scaling factor for the Poisson rate parameter.

\textbf{Frame Loss.}
Frame drop experiments are used to evaluate a model's robustness against data transmission packet loss or sensor failures. We simulate this process by randomly discarding frames, specifically with each frame in the input sequence being randomly dropped with a probability of $\rho \times 0.05$.

\section{Visualization}
\label{app:visualization}

\subsection{Dirichlet Energy}
\label{app:dirichlet}
\begin{figure*}[t]
    \centering
    \includegraphics[width=0.8\textwidth]{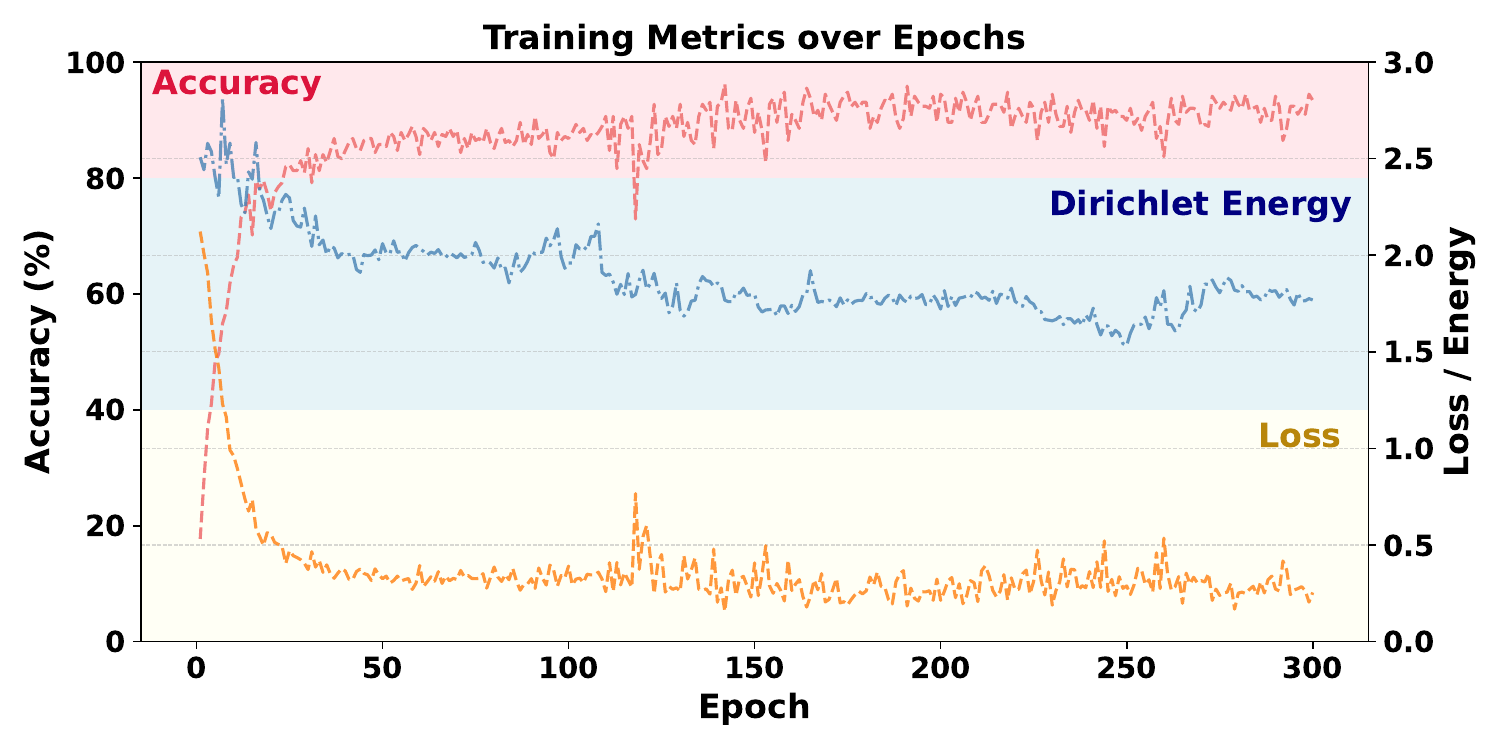}  
    \caption{Training Metrics over Epochs on DVS-Gesture.}
    \label{fig:metrics}
\end{figure*}
Figure~\ref{fig:metrics} presents the evolution of training metrics over 300 epochs on DVS-Gesture, specifically highlighting the trajectory of Dirichlet Energy (blue curve) alongside Accuracy (red) and Training Loss (orange). A key theoretical premise of our proposed GD mechanism is that it can be modeled as a gradient flow minimizing the Dirichlet energy of the graph signals, effectively smoothing the node features. Crucially, although we did not explicitly include a Dirichlet energy regularization term in the objective function, the experimental results show a consistent monotonic decrease in Dirichlet energy as the training progresses and the loss converges. The network naturally evolves towards a state of lower topological energy, validating that the diffusion process effectively functions as an implicit energy minimization operator, resulting in robust and noise-resilient feature representations.

\subsection{Feature Propagation Mechanisms} 
To investigate the internal dynamics of feature processing, we visualized the feature maps generated by Directed Connections (DC) and Graph Diffusion (GD), as shown in the Figure~\ref{fig:featuremap}. A distinct contrast is observed in their activation patterns. The DC mechanism (top row) exhibits highly localized and sparse activations, characterized by large dead zones (deep blue areas) and isolated high-response blocks. This suggests that traditional directed propagation is prone to feature degradation, where signals are progressively filtered or suppressed, leading to potential information loss and an over-reliance on narrow local features.

 In contrast, the GD mechanism (bottom row) demonstrates a more comprehensive and robust feature propagation capability. The heatmaps display a diffuse, granular distribution, indicating that the diffusion process effectively preserves information density across the spatial domain. Unlike DC, which suffers from signal decay in non-active regions, GD ensures that feature information is distributed globally. This mechanism allows the network to maintain a holistic representation of the input, effectively mitigating the feature degradation problem and ensuring that the read-out features remain rich and complete throughout the network layers.

\begin{figure*}[t]
    \centering
    \includegraphics[width=0.9\textwidth]{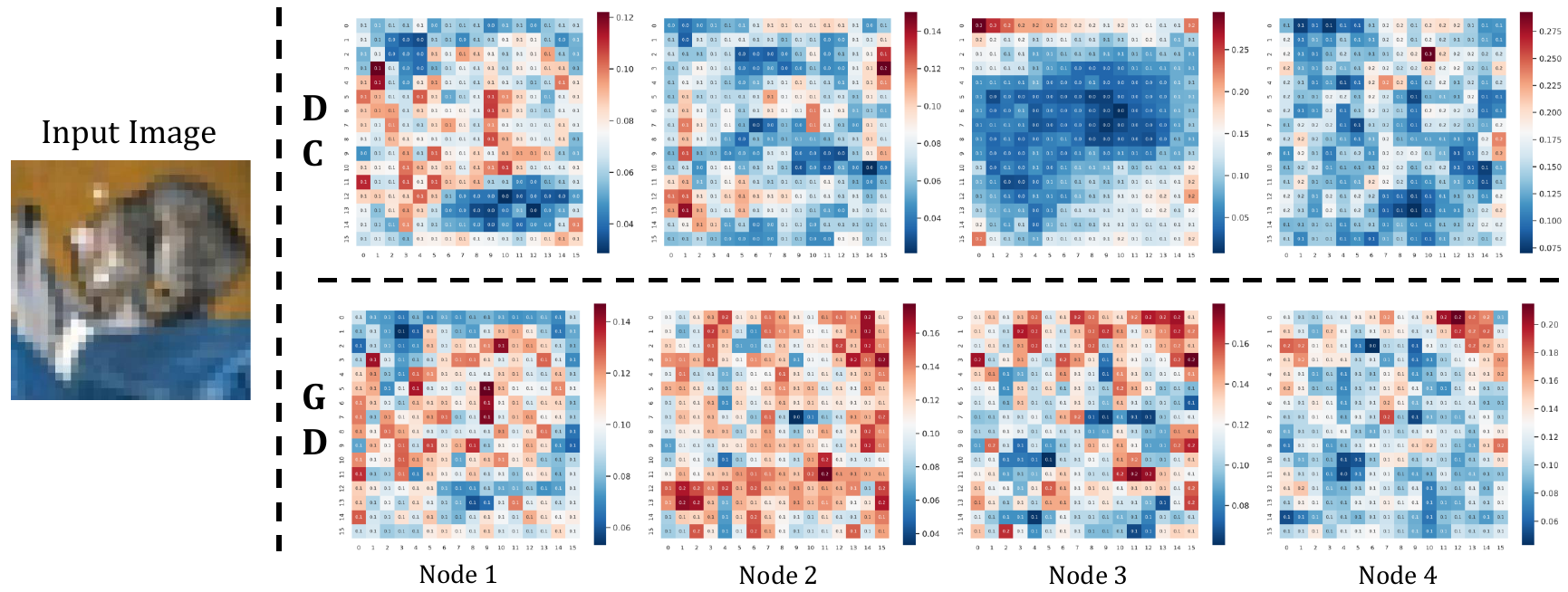}  
    \caption{Visualization of feature maps generated by directed connections (DC) and our graph diffusion (GD). }
    \label{fig:featuremap}
\end{figure*}

\subsection{Graph Structure Evolution}

\begin{figure*}[t]
    \centering
    \includegraphics[width=1.0\textwidth]{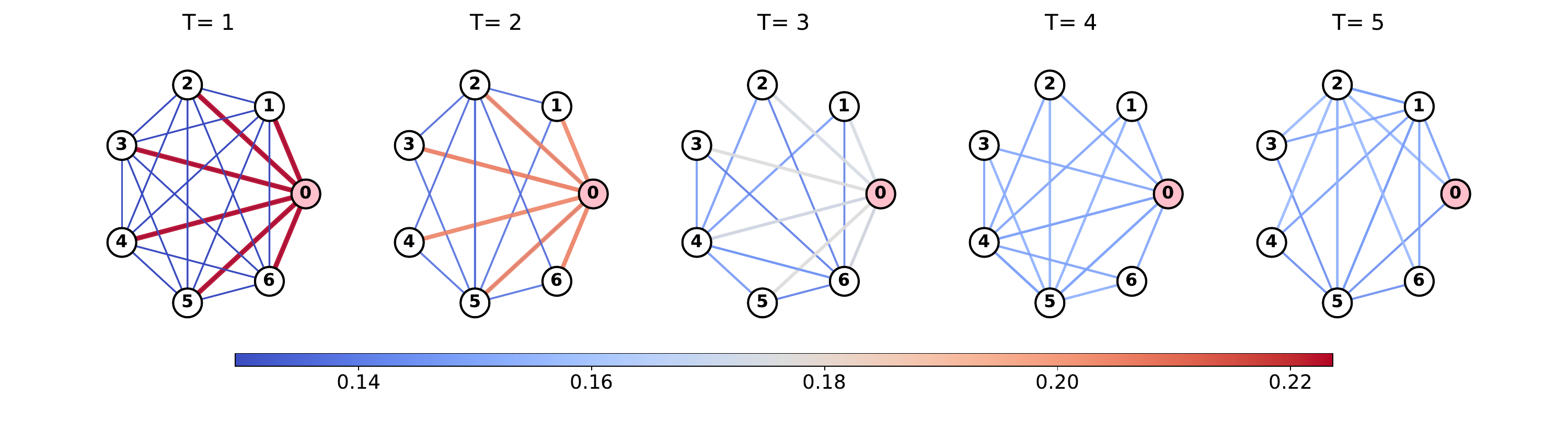}  
    \caption{Visualization of dynamic topological evolution for a single sample in DVS-Gesture.}
    \label{fig:graph}
\end{figure*}

As shown in the Figure~\ref{fig:graph}, this visualization tracks the connection changes of undirected graph on a DVS-Gesture sample over time ($T = 1$ to $5$), revealing that the STSP process exhibits distinct characteristics of Short-Term Synaptic Depression (STD). Initially, when $T=1$, the network establishes strong, dense connections (red) to rapidly capture the onset of neuromorphic signals. Subsequently, as the temporal information accumulates, the synaptic weights adaptively decay to a sparse state (blue). This STD-like behavior suggests that the network dynamically modulates its sensitivity: it prioritizes the capture of novel features in the initial phase while progressively suppressing temporal redundancy and noise in later stages, thereby self-organizing into an energy-efficient topology. Simultaneously, this distinct dynamic evolution trajectory offers sample-level topological cues for OOD detection, as anomalous inputs would typically induce deviant structural adaptation patterns.

\subsection{T-SNE in OOD} 
\begin{figure*}[t]
    \centering
    \includegraphics[width=0.9\textwidth]{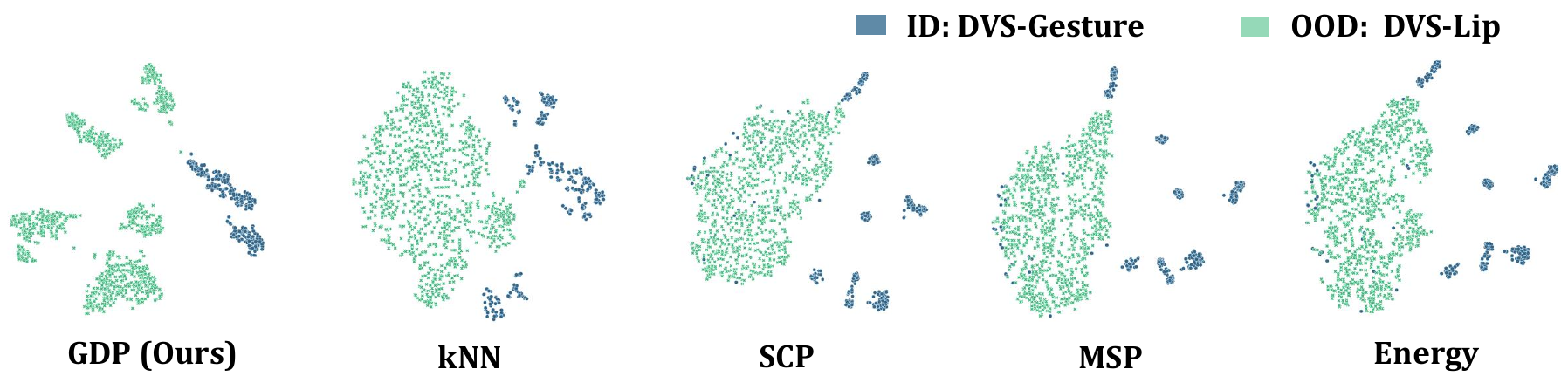}  
    \caption{t-SNE visualization of ID (DVS-Gesture) and OOD (DVS-Lip) distributions between DGP and other baselines.}
    \label{fig:tsne}
\end{figure*}
Figure~\ref{fig:tsne} visualizes the t-SNE embeddings of ID and OOD samples. A comparative analysis reveals fundamental differences in how the methods characterize latent data structures. As shown in the leftmost plot, our DGP method achieves the most distinct separation, exhibiting a unique `compact ID, clustered OOD' topology. The ID samples form a tightly condensed cluster, confirming that the STSP evolution converges to a consistent structural state for familiar inputs. Remarkably, the OOD samples are not mapped as uniform noise but are separated into distinct sub-clusters. This indicates that DGP preserves the intrinsic semantic manifolds of the anomalous data: distinct lip movements, despite being unseen, induce consistent and deterministic topological deviations. In stark contrast, baseline methods (kNN, SCP, MSP, Energy) display a `diffuse ID, entangled OOD' pattern. The OOD samples form a scattered, diffuse cloud that lacks internal structure and severely overlaps with the ID distribution. This chaotic entanglement stems from the over-confidence issue in static inference, where anomalous inputs trigger random high-confidence activations across ID classes. Consequently, these results confirm that the dynamic structural evolution captured by DGP provides a more robust and semantically meaningful fingerprint for anomaly detection than conventional static confidence scores.

\section{Analysis of Computation Efficiency}
\label{app:efficiency}
\subsection{Energy Analysis of GD and STSP}

Notably, in the spiking domain, the Static variant incorporates the Graph Diffusion mechanism compared to the Baseline, while the Full variant further integrates STSP. To demonstrate the hardware efficiency of these additional modules, we explicitly analyze their energy consumption based on 45nm CMOS process benchmarks ($E_{mac} = 4.6 \text{pJ}, E_{ac} = 0.9 \text{pJ}$, ~\cite{huang2024clif}).

Firstly, for the graph diffusion mechanism, given the adjacency matrix $\mathbf{S} \in \mathbb{R}^{N \times N}$ and the feature matrix $\mathbf{X} \in \mathbb{R}^{N\times(C\cdot H\cdot W)}$, we establish a rigorous theoretical upper bound on energy consumption by modeling the iterative process as dense matrix multiplication. Specifically, the operation $\mathbf{S}\cdot \mathbf{X}$ is decomposed into Multiplication operations (MULs) and Accumulation operations (ACs), formulated as follows:
\begin{align}
    \text{MULs}_{GD} &= T\cdot M \cdot (N^2) \cdot C \cdot H \cdot W, \label{eq:muls} \\
    \text{ACs}_{GD} &= T\cdot M \cdot (N(N-1)) \cdot C \cdot H \cdot W.\label{eq:acs}
\end{align}%
Substituting the parameters $N=7$, $T=5$, $M=2$, and dimensions $C=H=W=32$, and utilizing energy benchmarks from 45nm CMOS technology where $E_{mul} = E_{mac}-E_{ac} = 3.7 \text{pJ}$, we calculate the total energy consumption:
\begin{align} 
    E_{GD} = \text{MULs}_{GD} \cdot E_{mul} + \text{ACs}_{GD} \cdot E_{ac} \approx 0.0718 \text{ mJ}. \label{eq:gd_energy}
\end{align}
Even under this worst-case dense assumption, the total energy consumption is approximately 5.2\% of a standard convolutional layer (typically $>$ 1.39 mJ~\citep{horowitz20141}). Crucially, in the actual MorphSNN implementation, the expensive floating-point multiplications are largely bypassed due to the binary nature of the spiking input $\mathbf{X}$, the initialization sparsity where only the first row is active, and the topological sparsity of the pruned $\mathbf{S}$. Consequently, the real-world energy footprint will be significantly lower than this theoretical upper bound.

Secondly, for the STSP mechanism, the energy consumption is primarily attributed to Average Pooling, Feature Projection, Trace Update, Attention Calculation, and Structure Update. The derivation of the computational complexity for the principal components of STSP over $T$ timesteps is as follows:
\begin{align}
    \mathrm{ACs}_{\mathrm{stsp}} &\approx T \cdot[\underbrace{C H W}_{\text{Pooling}}+\underbrace{ 2N C(C-1)}_{\text{Projection}}+\underbrace{N^{2}(C-1)}_{\text{Attention}}] \approx 1.93 \times 10^{5},\\
    \mathrm{MACs}_{\mathrm{stsp}} &\approx T \cdot[ \underbrace{N \cdot C^{2}}_{\text{Projection}} +\underbrace{2 N C}_{\text{Trace}}+\underbrace{\left(2 N C^{2}+N^{2} C\right)}_{\text{Attention}}+\underbrace{2 N^{2}}_{\text{S-Up}}] \approx 9.45 \times 10^{4}.
    \label{eq:MACs_stsp}
\end{align}
By substituting into the energy consumption formula $E = \text{MACs} \cdot E_{mac} + \text{ACs} \cdot E_{ac}$, we calculate the total energy consumption of the STSP mechanism to be merely 0.00062 mJ. In conclusion, the combined energy overhead of GD and the STSP mechanism is negligible compared to traditional convolutional operations. This efficiency stems from the fact that graph diffusion is conducted via highly sparse matrix computations, while STSP operates solely on channel-wise firing rate representations with collapsed spatial dimensions, rather than within the high-dimensional pixel space.

\subsection{Analysis of Firing Rates}
\begin{figure*}[t]
    \centering
    \begin{subfigure}[b]{0.48\textwidth}
        \centering
        \includegraphics[width=\linewidth]{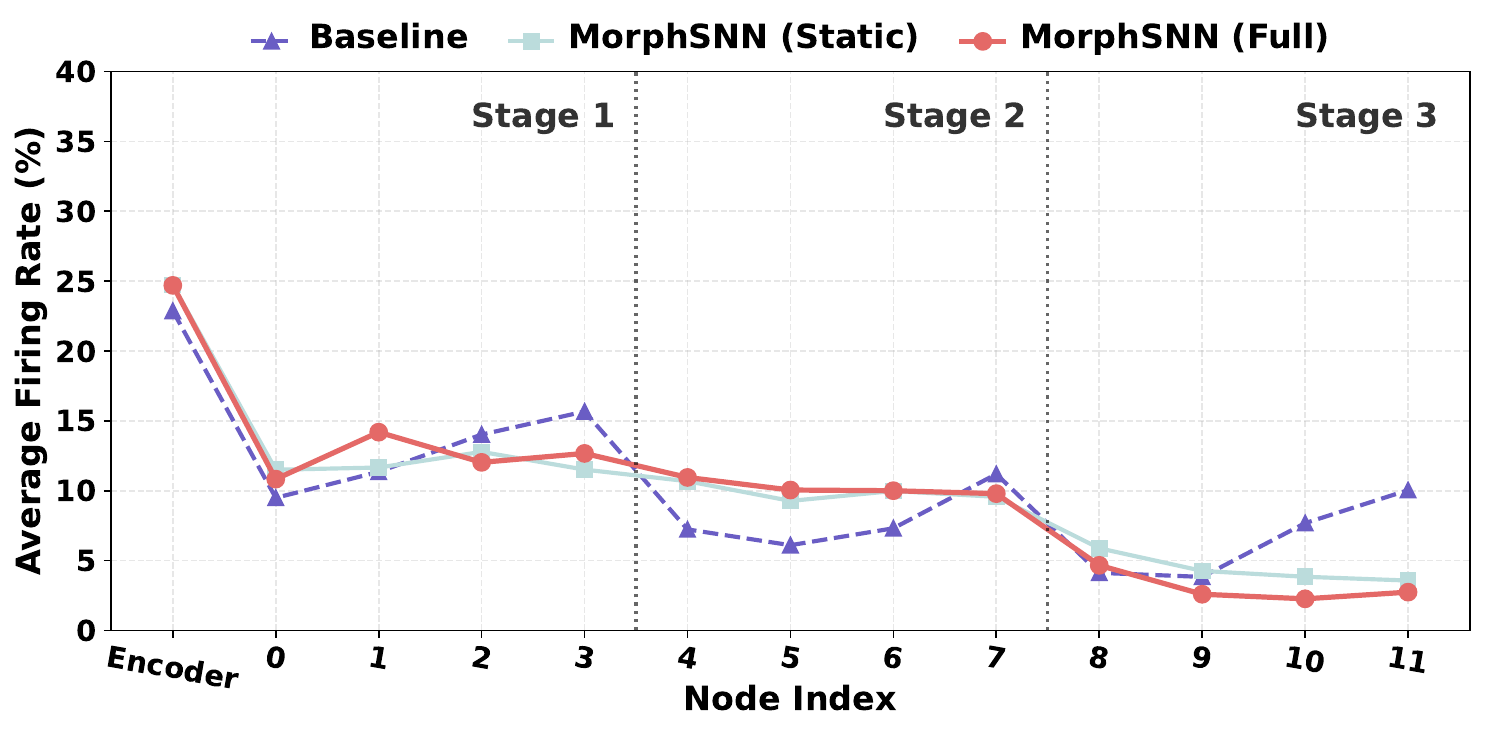}
        \caption{Results on CIFAR10.}
        \label{fig:cifar}
    \end{subfigure}
    \hfill 
    \begin{subfigure}[b]{0.48\textwidth}
        \centering
        \includegraphics[width=\linewidth]{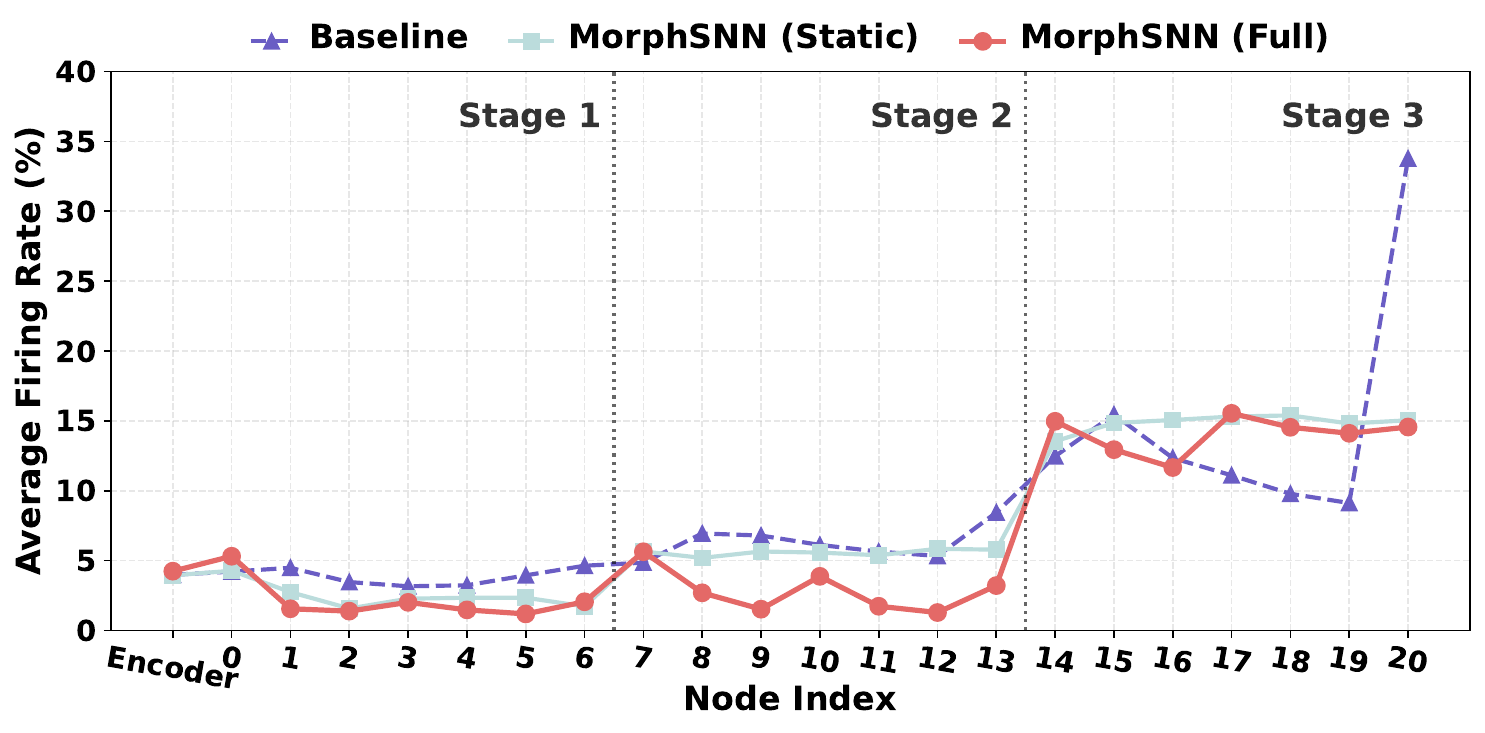}
        \caption{Results on DVS-Gesture.}
        \label{fig:gesture}
    \end{subfigure}
    
    \caption{Averaging Firing Rate of each node.}
    \label{fig:firingRate_combined}
\end{figure*}

Firing rate serves as a fundamental metric in neuromorphic computing, acting as a quantitative proxy for both neural excitability and information density. In biological neural networks, while high-frequency firing typically signifies strong responses to specific stimuli, biological systems globally optimize for metabolic efficiency through sparse coding mechanisms. For SNNs, this metric is critical as it correlates directly with the dynamic power consumption of neuromorphic hardware. Consequently, an ideal SNN model should achieve high task performance with minimal spiking activity, effectively balancing representational capacity with energy efficiency.

As illustrated in Figure~\ref{fig:firingRate_combined}, we evaluate the average firing rates of all test samples across individual nodes for the Baseline, Static, and Full variants discussed in Section~\ref{sec:ablation} on the DVS-Gesture dataset under $T=5$. Our node-wise analysis highlights significant differences in signal propagation dynamics. Specifically, the Baseline, utilizing directed graph connections, exhibits uncontrolled activity surges in deeper layers, reaching a firing rate of 33.78\%. This phenomenon is characteristic of ResNet-based SNNs, where excessive information accumulation along residual branches leads to over-firing. In contrast, both the Static and Full versions of MorphSNN maintain a stable and sparse firing regime. Notably, only the source nodes of the three stages (Nodes 0, 7, and 14) exhibit moderately higher firing rates, acting as information hubs, while subsequent nodes remain sparse. This ``low firing rate, high accuracy'' outcome demonstrates the signal propagation efficiency of our undirected graph diffusion and the robust representational capability of individual neural nodes. Furthermore, compared to the Static version, the Full version, following structural evolution via STSP, achieves an even lower average firing rate, demonstrating extreme efficiency in parameter utilization.

\subsection{Actual computation cost}

Following the detailed computation overhead analysis of GD and STSP, as well as the node-wise firing rates, we evaluate the actual energy consumption of the four variants introduced in Section~\ref{sec:ablation} (MorphANN, Baseline, Static, and Full) on the CIFAR-10 dataset, adopting the methodology from~\citet{sun2025ilif}.

Specifically, the computation overhead of neural networks is primarily assessed based on the Total Synaptic Operations (SOPs) during inference, which generally consist of MAC and AC operations. 
In general, ACs are defined as: 
\begin{equation}
    \text{ACs} = \sum_{t=1}^{T} \sum_{l=1}^{L-1} \sum_{i=1}^{N^l} f_i^l \, s_i^l[t],
\end{equation}
where fan-out $f_i^l$ represents the number of outgoing connections, $T$ denotes the simulation length in timesteps, $L$ is the total number of layers, and $N^l$ is the number of neurons in the $l$-th layer. Similarly, MACs are defined as: 
\begin{equation}
    \text{MACs} = \sum_{l=1}^{L-1} \sum_{i=1}^{N^l} f_i^l.
\end{equation}

In ANNs, given a specific architecture, SOPs are predominantly composed of MACs and remain constant. Conversely, in event-driven SNNs, SOPs are dictated by the firing rates of each layer and consist mainly of sparse ACs.

\begin{table}[h]
    \centering
    \caption{Computation Cost of different models on CIFAR-10.}
    \label{tb:energy}
    \resizebox{0.9\textwidth}{!}{
        \begin{tabular}{lcccccccc}
            \toprule
            \multirow{2}{*}{\textbf{Method}} & \multirow{2}{*}{\textbf{T}} & \textbf{Time} & \textbf{Param} & \textbf{Fire Rate} & \textbf{ACs} & \textbf{MACs} & \textbf{SOP Energy} & \textbf{Accuracy} \\
             & & \textbf{(s)} & \textbf{(M)} & \textbf{(\%)} & \textbf{(M)} & \textbf{(M)} & \textbf{($\mathbf{\mu}$J)} & \textbf{(\%)} \\
            \midrule
            \textbf{MorphANN} & 1 & 32  & 11.03 & -     & 0.29  & 400.20 & 1841.18 & 96.50 \\
            \textbf{Baseline} & 4 & 70  & 10.64 & 10.07 & 68.24 & 1.77   & 69.558  & 92.01 \\
            \textbf{Static}   & 4 & 72  & 10.85 & 9.91  & 69.69 & 1.77   & 70.863  & 95.97 \\
            \textbf{Full}     & 4 & 121 & 11.54 & 9.86  & 68.10 & 3.69   & 78.264  & 95.54 \\
            \bottomrule
        \end{tabular}
    }
\end{table}

Table~\ref{tb:energy} presents the specific computational cost metrics for the four variants. Here, T=1 indicates the ANN model, and `Time (s)' denotes the actual running time (in seconds) required to train one epoch on an NVIDIA RTX 4090 GPU. The results demonstrate that although MorphANN achieves the highest training accuracy, it consumes approximately 26$\times$ more energy compared to the three SNN variants. Among the SNN models, the Baseline, with its random connectivity, yields the lowest energy consumption but suffers from slightly degraded accuracy. The Full model incorporates the STSP mechanism; however, this proves somewhat superfluous for static datasets like CIFAR-10, incurring additional energy overhead. In contrast, the Static model, serving as a paradigm tailored for static datasets, effectively strikes a balance between accuracy and efficiency, achieving performance comparable to the ANN while maintaining significantly lower power consumption. It is worth noting that in terms of actual training time, Full consumes nearly twice as much time as Static. This does not imply that STSP incurs extremely high computational overhead, but rather stems from the fact that STSP operates on the intrinsic timestep, necessitating a single-step serial architecture rather than multi-step parallel processing. This is a challenge we must overcome in the future.
\begin{table}[h]
    \centering
    \caption{Detailed energy consumption breakdown for different methods on CIFAR-10.}
    \label{tb:energy_breakdown}
    \resizebox{0.98\textwidth}{!}{%
        \begin{tabular}{lcccccc}
            \toprule
            \multirow{2}{*}{\textbf{Method}} & \multicolumn{3}{c}{\textbf{Mem. Read \& Write}} & \multirow{2}{*}{\shortstack{\textbf{Synaptic \&} \\ \textbf{Neuron Op. (mJ)}}} & \multirow{2}{*}{\textbf{Addr. ($\mathbf{\mu}$J)}} & \multirow{2}{*}{\textbf{Total (mJ)}} \\
            \cmidrule(lr){2-4}
             & \shortstack{\textbf{Membrane} \\ \textbf{Potential (mJ)}} & \textbf{Parameters (mJ)} & \textbf{In / Out (mJ)} & & & \\
            \midrule
            \textbf{MorphANN} & 0 & 39.8431 & 39.8192 & 1.8412 & 0.5563 & \textbf{81.5041} \\
            \textbf{Baseline} & 13.2920 & 6.6460 & 0.0004 & 0.0696 & 59.9272 & \textbf{20.0676} \\
            \textbf{Static}   & 13.3569 & 6.6785 & 0.0005 & 0.0709 & 60.2313 & \textbf{20.1666} \\
            \textbf{Full}     & 13.5729 & 6.7214 & 0.0074 & 0.0691 & 60.3827 & \textbf{20.4313} \\
            \bottomrule
        \end{tabular}%
    }
\end{table}

Furthermore, according to~\citep{lemaire2022analytical}, in actual hardware execution, memory read/write operations (particularly for membrane potentials and parameters) are the dominant factors in energy consumption, dwarfing the negligible energy cost of synaptic operations. As shown in Table~\ref{tb:energy_breakdown}, energy consumption in ANNs is dominated by parameter retrieval and input/output (In/Out) operations. In SNNs, the irregular memory access patterns associated with membrane potential updates and sparse connectivity contribute significantly to the total cost. Consequently, the ranking of actual energy consumption aligns with intuitive expectations: MorphANN $\gg$ Full $\gtrsim$ Static $\gtrsim$ Baseline.

\section{Discussion}
\textbf{Prospect.}  As a mainstream paradigm in neuromorphic computing, SNN research benefits from placing greater emphasis on brain-inspired architectural exploration rather than focusing exclusively on performance benchmarks. Given that ANNs already excel at static classification, the unique promise of SNNs lies in mimicking biological operational mechanisms to bridge the gap in processing dynamic streaming scenarios. MorphSNN addresses this by incorporating bio-inspired undirected diffusion and structural plasticity into signal propagation. Looking ahead, we aim to pursue more granular innovations in brain-inspired architectures, extending beyond conventional convolution-based node designs. We aspire to endow neuromorphic models with the vitality needed for complex, open-ended environments, such as continual learning and spatial navigation. We believe these challenging domains deserve increased attention, broadening the scope of SNN research beyond single classification tasks.

\textbf{Reproducibility.} To ensure the reproducibility of our results, all experiments are implemented using PyTorch, SpikingJelly, and Timm, with the random seed strictly fixed at 2020. We have provided comprehensive descriptions of all architectural details. Our source code is publicly available on~ \url{anonymous.4open.science/r/MorphSNN-B0BC}.

\end{document}